%% file: main.tex
\documentclass{article}

\usepackage[preprint]{mystyle}

\usepackage[utf8]{inputenc} 
\usepackage[T1]{fontenc}    
\usepackage{hyperref}       
\usepackage{url}            
\usepackage{booktabs}       
\usepackage{amsfonts}       
\usepackage{nicefrac}       
\usepackage{microtype}      
\usepackage{xcolor}         

\usepackage{amsmath,amssymb} 
\usepackage{color}
\usepackage{algorithm}
\usepackage{algorithmic}
\usepackage{graphicx}
\usepackage{multirow}
\usepackage{subfigure}
\usepackage{comment}
\usepackage{stackengine}
\usepackage{colortbl}
\usepackage{amsthm}     
\usepackage{caption}
\usepackage{wrapfig}
\usepackage{physics}
\usepackage{amssymb}
\usepackage{amsmath}
\usepackage{mathtools}
\usepackage{cleveref}

\newtheorem{lemma}{Lemma}

\newtheorem*{proposition*}{Proposition}
\newtheorem{assumption}{Assumption}
\newtheorem{definition}{Definition}
\renewcommand{\algorithmicrequire}{ \textbf{Input:}} 
\renewcommand{\algorithmicensure}{ \textbf{Output:}} 

\newtheorem{theorem}{Theorem}
\newtheorem*{theorem*}{Theorem}

\makeatletter
\newcommand{\STATEx}{\item[]}
\makeatother

\newcommand{\bw}{\text{\boldmath{$w$}}}

\newcommand{\bH}{\text{\boldmath{$H$}}}

\newcommand{\bs}{\boldsymbol{s}}
\newcommand{\bu}{\boldsymbol{u}}

\newcommand{\bS}{\boldsymbol{S}}

\newcommand{\balpha}{\boldsymbol{\alpha}}

\newcommand{\bI}{\boldsymbol{I}}

\newcommand{\bA}{\boldsymbol{A}}

\newcommand{\cD}{\mathcal{D}}

\newcommand{\cL}{\mathcal{L}}

\newcommand{\cP}{\mathcal{P}}

\newcommand{\cA}{\mathcal{A}}

\newcommand{\cH}{\mathcal{H}}

\newcommand{\cO}{\mathcal{O}}

\newcommand{\train}{\mathrm{train}}
\newcommand{\val}{\mathrm{val}}

\title{BLADE: Scalable Bi-level Adaptive Data Selection for LLM Training}

\author{%
  Jiaxing Wang\thanks{Equal Contribution.}\hspace{1ex}$^{1}$, \quad Deping Xiang\footnote[1]{}\hspace{1ex}$^{1}$, \quad Jin Xu$^{2}$, \quad Zirui Liu$^{1}$, \quad Zicheng Zhang$^{1}$, \\
  \textbf{Guoqiang Gong$^{1}$,\quad Jun Fang$^{1}$, \quad Chao Liu$^{1}$, \quad Pengzhang Liu\thanks{Corresponding to Pengzhang Liu}\hspace{1ex}$^{1}$, \quad Tongxuan Liu$^{1}$}, \\
  \textbf{\quad Ke Zhang$^{1}$, \quad Qixiang Jiang$^{1}$} \\
  $^{1}$JD.com \quad $^{2}$ University of Oxford \quad $^{3}$Renmin University of China \\
  $^4$University of Chinese Academy of Sciences \\
  \texttt{\{wangjiaxing41,xiangdeping1,liuzirui.17,zhangzicheng6\}@jd.com} \\
  \texttt{\{gongguoqiang1, liupengzhang,liutongxuan1,zhangke323,jiangqixia\}@jd.com} \\
  \texttt{jin.xu@stats.ox.ac.uk} \\
}

\begin{document}

\maketitle

\begin{abstract}

As Large Language Model (LLM) datasets scale to trillions of tokens, data selection has emerged as a critical frontier to filter out uninformative noise and construct adaptive learning trajectories. Beyond static heuristic filtering, advanced data selection methods for LLM training largely follow two paradigms, each with fundamental limitations. Influence-based methods provide principled bi-level objectives but require intractable inverse-Hessian computations, while excess-loss methods are computationally efficient but rely on a static reference model that becomes misaligned with the evolving proxy model during training. We propose BLADE (Bi-Level Adaptive Data sElection), a Hessian-free framework for data selection. BLADE reformulates the bi-level optimization problem underlying influence-based methods as a penalized single-level objective via Lagrange multipliers, avoiding inverse-Hessian computation while revealing a principled connection to excess-loss based data selection. The resulting objective recovers an excess-loss form but replaces the static reference model with a dynamic one that stays synchronized with training.  Theoretically, we prove that this penalized formulation guarantees first-order convergence. For efficient online batch selection, we instantiate BLADE as a memoryless randomized block-coordinate Frank-Wolfe algorithm. Extensive experiments show that BLADE consistently outperforms state-of-the-art data selection baselines, providing a practical recipe for LLM training.
\end{abstract}

\input{sections/introduction.tex}
\input{sections/methods.tex}

\input{sections/related.tex}

\input{sections/experiment.tex}
\input{sections/conclusion.tex}

\bibliographystyle{plain}
\bibliography{bibfile}

\appendix
\input{sections/appendix.tex}

\end{document}

%% file: sections/introduction.tex
\section{Introduction}
\label{sec:introduction}

The remarkable capabilities of Large Language Models (LLMs) are fundamentally driven by the scale and quality of their pre-training data. However, as available corpora reach trillion-token scales, naively training on the entirety of these datasets often yields diminishing returns, as low-quality, noisy, or highly redundant tokens can degrade model reasoning capabilities~\cite{zhou23lima,engstrom2024dsdm}. This makes data selection, the problem of choosing informative, high-signal training examples that best improve performance, a central problem in LLM training.

Existing model-aware data selection methods for LLM training largely follow two paradigms. Influence-based methods estimate how each training example affects validation loss, yielding a principled bi-level formulation of data selection~\cite{xia2024less,yu2024mates}. However, they require approximating inverse-Hessian terms arising from the bi-level objective, making them computationally prohibitive at LLM scale. Excess-loss methods avoid this cost by scoring examples through the loss gap between a proxy model and a reference model~\cite{mindermann2022rho,lin2024rho1}. Yet the reference model is typically trained \textit{a priori} and kept fixed, causing stale selection signals as the proxy model evolves. More broadly, the relationship between these paradigms remains unclear: influence-based methods are theoretically grounded but difficult to scale, while excess-loss methods are efficient but lack a principled connection to the bi-level data selection objective.

We introduce BLADE (Bi-Level Adaptive Data sElection), a framework that bridges the principled bi-level formulation of influence-based methods with the scalability of excess-loss-based approaches. Starting from the bi-level objective, BLADE avoids differentiating through the full inner training problem by incorporating the inner optimality condition into the outer objective via Lagrange multipliers. The resulting penalized formulation eliminates the need for inverse-Hessian approximation and reveals a principled connection to excess-loss formulation: data are scored by the discrepancy between a proxy model trained on selected training data and a reference model that also incorporates validation information. Unlike conventional excess-loss methods, BLADE does not use a reference model trained \textit{a priori} and kept fixed. Instead, it dynamically synchronizes the reference model with the evolving proxy model throughout training. This allows BLADE to retain the bi-level foundation of influence-based methods while matching the scalability and online efficiency of excess-loss-based data selection.

Our main contributions are summarized as follows:
\begin{itemize}
    \item We introduce BLADE, a Hessian-free reformulation of bi-level data selection that converts the bi-level objective into a penalized single-level problem via Lagrange multipliers, eliminating the need for inverse-Hessian approximation. BLADE enjoys first-order convergence guarantees.
    \item We show that BLADE bridges the two dominant paradigms for LLM data selection by deriving an excess-loss-style selection criterion from the bi-level objective and replacing the fixed reference model used in prior excess-loss methods with a dynamic reference model synchronized with the evolving proxy model.
    \item For efficient online batch selection, we instantiate BLADE as a memoryless randomized block-coordinate Frank-Wolfe method. Extensive experiments across various LLM training scenarios show that it consistently accelerates convergence and outperforms existing data selection baselines.
\end{itemize}

%% file: sections/methods.tex
\section{Methodology}
\label{sec:methodology}
In this section, we present BLADE, a Hessian-free framework for bi-level adaptive data selection. We start from a rigorous bi-level formulation of data selection and derive a penalized single-level objective, which we subsequently instantiate as an efficient online token-level selection algorithm. Figure~\ref{fig:computation_precdure} illustrates the overall procedure, and the complete workflow is provided in Algorithm~\ref{alg:workflow}. The notations used in this paper are summarized in Appendix~\ref{sec:summary_of_notations}.

\begin{figure*}[t]
\centering
\includegraphics[width=0.88\linewidth]{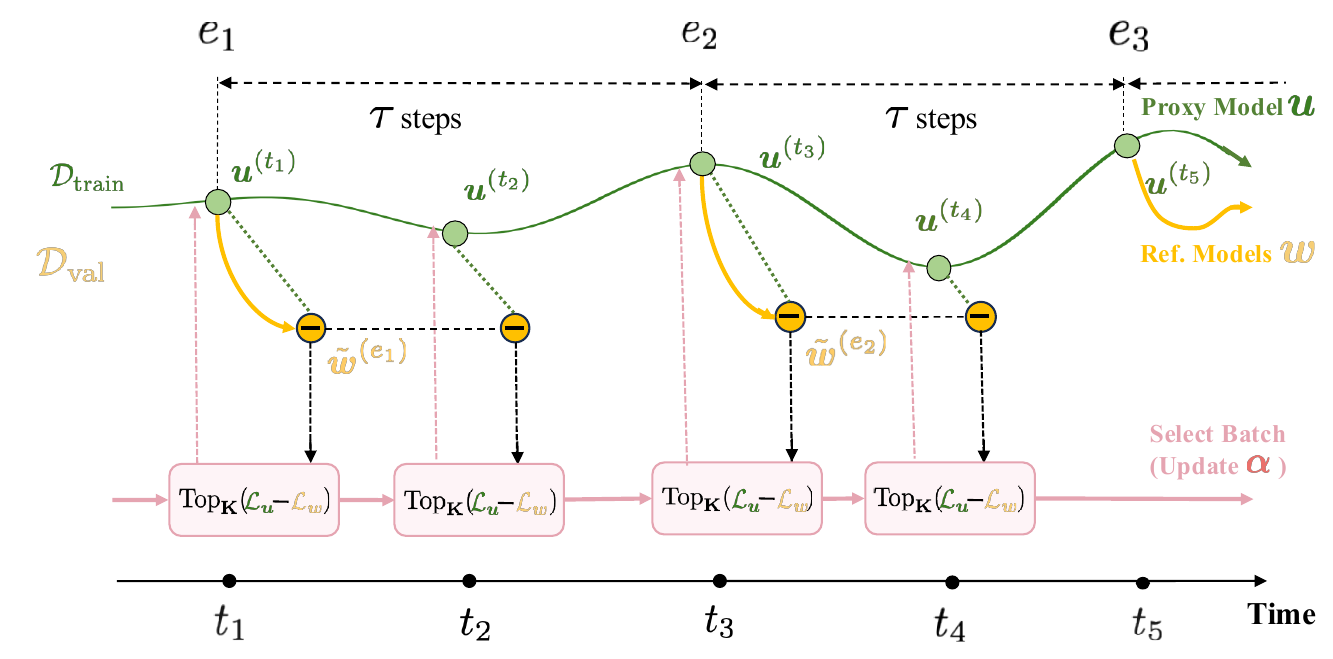}
\caption{The overall computation procedure of the BLADE framework. Twined proxy model (green) and reference model (orange) are used to guide the token selection (pink).}
\label{fig:computation_precdure}
\end{figure*}

\subsection{Problem Formulation}

Consider training an LLM on a dataset $\mathcal{D}_{train}=\{x_n\}_{n=1}^N$. In the context of dense autoregressive LLM training, we define the selection unit $n$ at the token level. Thus data selection refers to the problem of finding the optimal subset of informative tokens.  We encode this decision using a continuous relaxation of the selection vector, defining the feasible set as $\mathcal{A} := \left\{\boldsymbol{\alpha}\in [0,1]^N  |  \sum_{n=1}^{N}\balpha_n=\gamma N \right\}$~\footnote{While data selection is inherently discrete, this continuous relaxation mathematically establishes a differentiable, gradient-based bi-level optimization objective. Crucially, as we will show in Section~\ref{subsec:blade}, our choice of optimization algorithm naturally projects these continuous variables back onto discrete extreme points ($\balpha \in \{0, 1\}^N$), seamlessly bridging continuous theory with discrete data selection.}, where $\gamma$ is the keeping ratio. 
For any selection vector $\balpha$, let $w(\balpha)$ denote the optimal model parameters obtained by minimizing the weighted training loss. Our goal is to minimize the validation loss $\mathcal{L}_{\text{val}}(w(\balpha))$ over $\balpha$, yielding the following bi-level optimization problem:
\begin{equation}
\small
\label{eq:bi-level}
    \min_{\balpha \in \mathcal{A}} \cL_{\val}(\bw(\balpha)): = \mathcal{L}_{\val}\left(\bw(\balpha)\right) \ \  \text { s.t. } \bw(\balpha)\in\arg \min_{\boldsymbol{w}} \cL_{\train}(\balpha, \bw) :=  \frac{1}{\gamma N}\sum_{n=1}^N \boldsymbol{\alpha}_n\mathcal{L}_{\text{train}}^{n}(\boldsymbol{w}).
\end{equation}

Intuitively, the outer-level problem seeks the optimal token composition to minimize validation loss, constrained by model weights optimized on the inner-level reweighted training loss.

\subsection{BLADE: Hessian-Free Bilevel Data Selection}
\label{subsec:blade}
Solving the bi-level optimization problem~\eqref{eq:bi-level} is challenging due to its nested structure. By viewing the inner-level problem as a constraint and subsequently incorporating it into the outer-level problem as a Lagrangian penalty, Equation \eqref{eq:bi-level} can be reformulated as a single-level problem~\cite{shen23onpenalty,kwon2023f2sa,wang2025tandem}:
\begin{equation}
\label{eq:bi-level-penalty}
    \min_{\balpha\in\cA, \boldsymbol{w}} \cH_{\lambda}(\boldsymbol{\alpha}, \boldsymbol{w}):= \mathcal{L}_{\text{val}}( \boldsymbol{w})+ \lambda \left(\mathcal{L}_{\text{train}}( \boldsymbol{\alpha}, \boldsymbol{w})- \min_{\boldsymbol{u}} \mathcal{L}_{\text{train}}(\boldsymbol{\alpha}, \boldsymbol{u})\right).
\end{equation}
Here, the auxiliary variable $\boldsymbol{u}$ is introduced as a proxy of $\boldsymbol{w}(\balpha)\in \arg\min_{\bw}\cL_{\train}(\balpha, \bw)$. The constraint in Equation \eqref{eq:bi-level} is transferred into the penalization $\cL_{\train}(\balpha, \bw) - \min_{\bu}\cL_{\train}(\balpha, \bu)$. As justified by Proposition 3 in \citep{shen23onpenalty}, by properly invoking the penalty parameter $\lambda \to \infty$, the solution of Equation \eqref{eq:bi-level-penalty} rigorously approximates the original bi-level problem in Equation~\eqref{eq:bi-level} (see Appendix~\ref{sec:convergence} for details).

\paragraph{Algorithm Procedure}

Optimizing Equation~\eqref{eq:bi-level-penalty} involves coordinating three variables: the data selection indicator $\balpha$, a proxy model $\bu$, and a reference model $\bw$.

\textit{\textbf{Update on $\boldsymbol{u}$} (Proxy Model)}: Given current batch of data, determined by $\balpha^{(t)}$, we approximate the $\bu\in \arg\min_{\bu}\cL_{\train}(\balpha^{(t)}, \bu)$ in the penalization term with one gradient step:
\begin{equation}
\label{eq:updata_u}
    \boldsymbol{u}^{(t+1)}=\boldsymbol{u}^{(t)}-\eta_u \nabla_{{\boldsymbol{u}}} \mathcal{L}_{\text{train}}\left(\boldsymbol{\alpha}^{(t)}, \boldsymbol{u}^{(t)}\right)
\end{equation}
~(green line with arrow in Figure~\ref{fig:computation_precdure}). The proxy model is trained on the selected training data and maintained through the whole data selection process. 

\textit{\textbf{Update on $\boldsymbol{w}$} (Reference Model)}:
The reference model $\bw$ serves to capture signals from both the training and validation distributions. Based on the $\bw$-dependent terms in Equation~\eqref{eq:bi-level-penalty}, the update rule becomes:
\begin{equation}
\label{eq:update_w}
    \boldsymbol{w}^{(e)}_{k+1} = \boldsymbol{w}^{(e)}_{k} - \eta_{\boldsymbol{w}} \left( \nabla_{\boldsymbol{w}} \mathcal{L}_{\text{val}}\left(\boldsymbol{w}^{(e)}_k \right) + \lambda\nabla_{\boldsymbol{w}}\mathcal{L}_{\text{train}}\left(\boldsymbol{\alpha}^{(e \tau)}, \boldsymbol{w}^{(e)}_k \right)  \right),
\end{equation}

To avoid an overly erratic reference model, $\bw$ is updated periodically every $\tau$ steps of training $\bu$. We introduce an episode index $e = \lfloor t/\tau \rfloor$ to denote the current update cycle. To control the optimization disparity between $\bw$ and $\bu$, we synchronize their starting points by setting $\bw_0^{(e)} = \bu^{(e\tau)}$ at the beginning of each episode. After running Equation~\eqref{eq:update_w} for $K$ steps~(orange line in Figure~\ref{fig:computation_precdure}), the finalized reference model $\tilde{\bw}^{(e)} = \bw_K^{(e)}$ is utilized for data selection during the subsequent $\tau$ steps.

\textit{\textbf{Update on $\boldsymbol{\alpha}$} (Data Weights)}:
The term related to $\boldsymbol{\alpha}$ is $\lambda \left(\mathcal{L}_{\text{train}}( \boldsymbol{\alpha}, \boldsymbol{w})- \min_{\boldsymbol{u}} \mathcal{L}_{\text{train}}(\boldsymbol{\alpha}, \boldsymbol{u})\right)$. That says, the $\balpha$ update seeks a solution to Equation~\eqref{eq:bi-level-penalty} by finding data that aligns the training loss with the validation-informed reference model. Recall that $\mathcal{L}_{\text{train}}(\boldsymbol{\alpha}, \cdot)$, by definition, is the $\boldsymbol{\alpha}$ weighted token-wise loss, taking the gradient w.r.t $\balpha$ and applying Frank Wolfe update yields:

\begin{equation}
    \label{eq:update_alpha}
    \balpha^{(t+1)} = \balpha^{(t)} + \eta_{\balpha}(\bs^{(t)} - \balpha^{(t)}) \  \text{where} \ \bs = \arg \min _{\balpha \in \mathcal{A}}\left\langle \underbrace{ \mathcal{L}_{\text{train}}^{1:N} \left( \Tilde{\boldsymbol{w}}^{(e)}  \right)}_{\textbf{reference model}} -\underbrace{\mathcal{L}_{\text{train}}^{1:N}\left(\boldsymbol{u}^{(t)}\right)}_{\textbf{proxy model}} , \balpha\right\rangle
\end{equation}

Let $\Delta \in \mathbb{R}^N$ denote the hyper-gradient vector w.r.t $\balpha$, where its $n$-th element is the token-level loss difference $\Delta_n = \mathcal{L}_{\text{train}}^n(\bw) - \mathcal{L}_{\text{train}}^n(\bu)$~\footnote{For simplicity, here we omit the subscription of the reference model $\bw$ and proxy model $\bu$.}. A larger loss gap indicates that the token is highly informative for aligning the proxy model with the validation data, thus prioritizing it for selection.

\paragraph{Convergence Analysis} 
Because the objective is non-convex and constrained over $\balpha \in \mathcal{A}$, we evaluate its first-order stationary convergence using the Frank-Wolfe Gap, which serves as the standard convergence measure for conditional gradient methods in non-convex settings~\cite{simon2016convergefw,reddi2016sfwnoncovex}. The theoretical convergence of our method is summarized below.

\begin{theorem}[Informally]
\label{thm:convergence}
Under standard regularity assumptions (PL condition, smoothness, bounded Hessians, detailed in Appendix~\ref{sec:convergence}),  $\balpha^{(t)}$ obtained in the BLADE algorithm converges to a first-order stationary point of the bi-level objective~\eqref{eq:bi-level} in the rate of $\cO(T^{-\frac{1}{4}})$ by properly selecting $\lambda, K, \eta_{\balpha},\eta_{\bw}$, and $\eta_{\bu}$.

\end{theorem}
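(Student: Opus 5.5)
The plan is to reduce the statement to the value-function penalty framework of \cite{shen23onpenalty,kwon2023f2sa} and then treat BLADE as an inexact Frank--Wolfe method on the reduced function of $\balpha$. Write $v(\balpha)=\min_{\bu}\cL_{\train}(\balpha,\bu)$. Also write $\bw^*_\lambda(\balpha)\in\arg\min_{\bw}\big(\cL_{\val}(\bw)+\lambda\cL_{\train}(\balpha,\bw)\big)$ and define
\[
F_\lambda(\balpha):=\min_{\bw}\cH_\lambda(\balpha,\bw)=\cL_{\val}(\bw^*_\lambda(\balpha))+\lambda\big(\cL_{\train}(\balpha,\bw^*_\lambda(\balpha))-v(\balpha)\big).
\]

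The first step is to establish the structure of $F_\lambda$. Assume $\cL_{\train}(\balpha,\cdot)$ is $\mu$-PL and $\ell$-smooth, and $\cL_{\val}$ is smooth with bounded gradient. For $\lambda$ large enough that $\lambda\mu$ dominates the smoothness constant of $\cL_{\val}$, the inner function $\cL_{\val}+\lambda\cL_{\train}(\balpha,\cdot)$ is itself PL. Danskin's theorem under PL then gives
\[
\nabla F_\lambda(\balpha)=\lambda\big(\mathcal{L}^{1:N}_{\train}(\bw^*_\lambda)-\mathcal{L}^{1:N}_{\train}(\bu^*)\big)/(\gamma N).
\]
This is exactly $\lambda\Delta/(\gamma N)$ evaluated at the exact minimizers, which is what the linear minimization step in \eqref{eq:update_alpha} uses. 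Next I will bound the smoothness constant of $F_\lambda$ in $\balpha$; I expect it to be $\cO(\lambda)$, with the Hessian-boundedness assumptions controlling how fast the minimizers move with $\balpha$. Finally, I will invoke Proposition 3 of \cite{shen23onpenalty} to relate the two problems: the gap $\|\nabla F_\lambda(\balpha)-\nabla \cL_{\val}(\bw(\balpha))\|$ is $\cO(1/\lambda)$. As a result, a Frank--Wolfe gap of $\epsilon$ for $F_\lambda$ is a Frank--Wolfe gap of $\epsilon+\cO(D/\lambda)$ for \eqref{eq:bi-level}, where $D$ is the diameter of $\cA$.

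The second step is a one-step descent inequality for the iterates. By smoothness, with $\bs^{(t)}$ the linear-minimization output,
\[
F_\lambda(\balpha^{(t+1)})\le F_\lambda(\balpha^{(t)})-\eta_{\balpha}\,G(\balpha^{(t)})+\eta_{\balpha} D\,\|\hat g^{(t)}-\nabla F_\lambda(\balpha^{(t)})\|+\tfrac{L_\lambda}{2}\eta_{\balpha}^2D^2.
\]
Here $G$ is the Frank--Wolfe gap, $\hat g^{(t)}$ is the gradient actually used, and $L_\lambda$ is the smoothness constant of $F_\lambda$. Telescoping over $T$ steps and using that $F_\lambda$ is bounded (its range is $\cO(1)$ plus the penalty, which is nonnegative) gives
\[
\min_t G\lesssim \frac{1}{\eta_{\balpha}T}+\frac{1}{T}\sum_t\text{err}_t+\lambda\eta_{\balpha}.
\]

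The main obstacle is the third step, bounding $\text{err}_t$. The error equals $\lambda$ times $\|\bw^{(e)}_K-\bw^*_\lambda(\balpha^{(t)})\|+\|\bu^{(t)}-\bu^*(\balpha^{(t)})\|$, with the Lipschitz constant of the per-token losses absorbed. There are three sources, and the plan handles each separately.
\begin{itemize}
\item $K$ steps of gradient descent on a PL objective from the warm start $\bu^{(e\tau)}$ contract linearly. This is why the synchronization $\bw_0^{(e)}=\bu^{(e\tau)}$ matters: $\bu^{(e\tau)}$ is within $\cO(1/\lambda)$ of $\bw^*_\lambda$ plus the tracking error of $\bu$.
\item The reference model is stale by up to $\tau$ steps. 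Each Frank--Wolfe step moves $\balpha$ by at most $\eta_{\balpha}D$ and the minimizers are Lipschitz in $\balpha$, so staleness adds $\cO(\tau\eta_{\balpha})$.
\item The single-step update of $\bu$ is a standard two-timescale tracking recursion. The error contracts by $1-\mu\eta_u$ and is perturbed by $\cO(\eta_{\balpha})$ per step, giving a steady state of $\cO(\eta_{\balpha}/\eta_u)$.
\end{itemize}

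Multiplying these errors by $\lambda$ and balancing all terms is the final step. I will choose $\lambda\asymp T^{1/4}$, $\eta_{\balpha}\asymp T^{-1/2}$, $\eta_u$ and $\eta_{\bw}$ constant, and $K\asymp\log T$. With these choices each contribution — $1/(\eta_{\balpha}T)$, $\lambda\eta_{\balpha}$, $\lambda\eta_{\balpha}/\eta_u$, and the $\cO(1/\lambda)$ penalty bias — is $\cO(T^{-1/4})$. This yields the claimed rate for the Frank--Wolfe gap of \eqref{eq:bi-level}.
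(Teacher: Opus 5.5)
Your skeleton matches the paper's proof. Both run an inexact Frank--Wolfe descent inequality on $\cH_\lambda(\balpha,\bw^*_\lambda(\balpha))$ with smoothness $L_\lambda=\cO(\lambda)$ and the envelope-theorem hyper-gradient. Both bound the tracking error by $\lambda L\big(\|\tilde\bw^{(e)}-\bw^*_\lambda(\balpha^{(t)})\|+\|\bu^{(t)}-\bw^*(\balpha^{(t)})\|\big)$ and split it into the $K$-step residual, the $\cO(\tau\eta_{\balpha})$ staleness and the proxy tracking. Both end by moving the $\cO(1/\lambda)$ gradient gap to the Frank--Wolfe gap through the diameter of $\cA$.

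The real difference is the proxy-tracking step. The paper never bounds $\|\bu^{(t)}-\bw^*\|$ in steady state. It writes $\|\bu^{(t)}-\bw^*\|\le\sqrt{2\mathcal{R}^{(t)}/\mu}$, splits the cross term $\eta_{\balpha}\lambda\sqrt{\mathcal{R}^{(t)}}$ by AM--GM, and absorbs $\mathcal{R}$ into a potential $V=\cH_\lambda+c\,\mathcal{R}$ with constant $c$. That costs $\cO(\lambda^2\eta_{\balpha}^2)$ per step, giving $\frac{1}{T\eta_{\balpha}}+\lambda^2\eta_{\balpha}+\frac1\lambda$ and forcing $\eta_{\balpha}=T^{-3/4}$.

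Your direct substitution of the steady-state error $\cO(\eta_{\balpha}/\eta_{\bu})$ costs only $\lambda\eta_{\balpha}/\eta_{\bu}$; it amounts to choosing $c\asymp\lambda$ in the paper's potential. That is exactly why your larger step $\eta_{\balpha}\asymp T^{-1/2}$ works; under the paper's accounting it would leave an $\cO(1)$ term. Your route is therefore tighter. Your own bookkeeping would even allow $\lambda\asymp T^{1/3}$, $\eta_{\balpha}\asymp T^{-2/3}$ and an $\cO(T^{-1/3})$ rate. The Lyapunov route, in exchange, handles the coupled recursion and its transient in one telescoping inequality.

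Three points need repair.

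\textbf{Track the residual, not the distance.} Under PL alone, one gradient step gives a clean contraction of the residual $\mathcal{R}$, not of the distance to a minimizer. Run your tracking recursion on $\mathcal{R}$, as the paper does, and take square roots afterwards. You must also show that the drift of $\mathcal{R}$ when $\balpha$ moves is $\cO(\eta_{\balpha}\sqrt{\mathcal{R}}+\eta_{\balpha}^2)$. This follows from linearity of $\cL_\train$ in $\balpha$ together with $v(\balpha')-v(\balpha)\ge\langle\balpha'-\balpha,\cL^{1:N}_\train(\bu^*(\balpha'))\rangle/(\gamma N)$. A crude $\cO(\eta_{\balpha})$ drift gives only $\sqrt{\mathcal{R}}=\cO(\sqrt{\eta_{\balpha}/\eta_{\bu}})$, and then $\lambda\sqrt{\eta_{\balpha}}=\cO(1)$ at your parameters.

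\textbf{$\eta_{\bw}$ cannot be constant.} The reference objective $\cL_\val+\lambda\cL_\train(\balpha,\cdot)$ has smoothness constant of order $(1+\lambda)L$, which grows with $T$. A $\lambda$-independent step therefore eventually loses the linear contraction your first bullet relies on. Take $\eta_{\bw}\asymp 1/\lambda$ (the paper uses $1/L_\lambda$). Since $\mu_\lambda\asymp\lambda$, the contraction factor is then $\lambda$-independent and $K\asymp\log T$ still suffices.

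\textbf{PL of the penalized inner problem must be assumed.} PL of $\cL_\val+\lambda\cL_\train(\balpha,\cdot)$ does not follow from PL plus smoothness. The argument that $\lambda\mu$ dominates the curvature of $\cL_\val$ works for strong convexity, not PL. For example, with $\cL_\train=y^2$ (which is PL) and $\cL_\val=1+\cos x$, the function $1+\cos x+\lambda y^2$ has the non-global stationary point $(0,0)$ for every $\lambda$. Take it as an assumption, as the paper's Assumption~1 does, with $\mu_\lambda$ of order $\lambda$. That same assumption is what yields $L_\lambda=\cO(\lambda)$.
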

The proof is left in Appendix~\ref{sec:convergence}. 
Theorem~\ref{thm:convergence} indicates that BLADE theoretically ensures the optimality of the selected data.

The Frank-Wolfe algorithm provides an intrinsic property to bridge our continuous optimization with the discrete reality of data selection: the Linear Minimization Oracle (LMO, solving for $\bs$ in Equation~\eqref{eq:update_alpha}) naturally projects the update onto the extreme points of the constraint set $\mathcal{A}$. Taking a greedy step size ($\eta_\alpha = 1$) yields a perfectly sparse, binary closed-form solution:
\begin{equation}
    \label{eq:update_alpha_closed_form}
    \balpha_n^{(t+1)} = I_{\text{Top-}\gamma N}(-\Delta_n^{(t)})
\end{equation}
meaning $\balpha_n^{(t+1)} = 1$ if the loss gap $-\Delta_n^{(t)}$ ranks in the Top-$\gamma N$, and $0$ otherwise, perfectly recovers our data selection scenario.

\paragraph{BLADE Admits Online Batch Selection} 
While Equation~\eqref{eq:update_alpha_closed_form} establishes the optimal data selection over the global dataset of size $N$, evaluating the full hyper-gradient $\Delta$ across billions of tokens is computationally cumbersome. To transition to a scalable practical implementation, we efficiently optimize $\balpha$ by instantiating BLADE as a memoryless Randomized Block-Coordinate Frank-Wolfe method~\cite{lacoste-julien2013blockfw,reddi2016sfwnoncovex,hazan2012prejectionfree}. At each iteration, we uniformly sample a candidate batch $\mathcal{B}_{cand}^{(t)}$, that is, effectively selecting a block of coordinates from the high-dimensional vector $\balpha$. Applying the Top-$K$ ($K = \gamma|\mathcal{B}_{cand}^{(t)}|$) greedy selection strictly within this sampled subset serves as a stochastic, online approximation of the global bi-level objective. Beyond mere computational efficiency, this online batch curation~\cite{mindermann2022rho,lin2024rho1,wang2024greats}  aligns perfectly with the non-stationary dynamics of LLM training. It acts as an adaptive curriculum, allowing the data selection to continuously synchronize with the evolving capabilities of the proxy model. 
Finally, to preserve the structural integrity and autoregressive dependencies of the attention context window during LLM training, tokens with $\balpha_n = 0$ are not physically discarded from the sequence. Instead, the binary decision acts as a token-level loss mask (as in RHO-1~\cite{lin2024rho1}). This masking mechanism ensures that BLADE can orchestrate a fine-grained, dynamic token curriculum without disrupting the sequential causality of the language model.

\subsection{BLADE Unifies and Improves Existing Data Selection Methods}
\label{subsec:unifies}
Our BLADE framework elegantly bridges the gap between the two dominant paradigms in data selection:  the theoretical rigor of influence-based methods~\cite{pan2024gdig,yu2024mates} and the computational efficiency of excess-loss-based methods~\cite{mindermann2022rho,lin2024rho1}. By inspecting their core selection criteria, we demonstrate how BLADE unifies these approaches while systematically addressing their respective limitations.

\begin{table}[h]
\centering
\caption{BLADE unifies the principled bi-level origin of the influence-based and the tractable excess-loss structure. By reformulating the bi-level objective with a penalty, BLADE bypasses the intractable Hessian required by the influence-based method, while employing a dynamic reference model to overcome the lagging issue of excess-loss based methods.}
\label{tab:criterion_comparison}
\begin{tabular}{@{}lc@{}}
\toprule
Method & Selection Criteria (Scoring) \\ \midrule
Excess-loss based &     $\underbrace{\mathcal{L}_{\text{train}}^{n}(\bar{\bw})}_{\text{Static Ref. model}} - \underbrace{\mathcal{L}_{\text{train}}^{n}(\bu^{(t)})}_{\text{Proxy model}}$                   \\ [4.2ex]
Influence-based &
$-\nabla_{\bw}\mathcal{L}_{\text{train}}^{n}(\bw)^{T} \bH^{-1} \nabla_{\bw}\mathcal{L}_{\text{val}}(\bw)$ \\  [1.2ex]
BLADE & $-\nabla_{\bw}\mathcal{L}_{\text{train}}^{n}(\bw)^{T} \bH^{-1} \nabla_{\bw}\mathcal{L}_{\text{val}}(\bw)$  \ $\Rightarrow$ \  $\underbrace{\mathcal{L}_{\text{train}}^{n}(\Tilde{\bw}^{(e)})}_{\textbf{Dynamic} \  \text{Ref. model}} - \underbrace{\mathcal{L}_{\text{train}}^{n}(\bu^{(t)})}_{\text{Proxy model}}$                \\ \bottomrule
\end{tabular}
\end{table}

\paragraph{Unifying Influence-Based Paradigms:} Influence-based methods prioritize data by estimating a sample's impact on validation loss. Mathematically, by invoking Cauchy's Implicit Function Theorem, evaluating the influence function is equivalent to computing a snapshot of the hyper-gradient of our exact bi-level objective (Equation~\eqref{eq:bi-level}) from a uniform initial state ($\balpha_n = \frac{1}{N}$):
\begin{equation*}
      \frac{\partial\mathcal{L}_{\text{val}}}{\partial\balpha_n} \propto -\nabla_{\bw}\mathcal{L}_{\text{train}}^n(\bw)^T \bH^{-1} \nabla_{\bw}\mathcal{L}_{\text{val}}(\bw)
\end{equation*}

where $\bH$ is the Hessian of the training objective ($\mathcal{L}_{\text{train}}$) (see Appendix~\ref{sec:hyper-gradient} for the full derivation). In this sense, existing influence-based methods are essentially attempting to approximate the solution to our bi-level formulation. However, exactly computing this inverse-Hessian product at scale is intractable, forcing prior work to rely on brittle approximation techniques~\cite{pan2024gdig,xia2024less} or auxiliary influence predictors~\cite{yu2024mates}. Instead, BLADE formally addresses the root bi-level optimization problem by reformulating it into a penalized single-level objective (Equation~\eqref{eq:bi-level-penalty}). This entirely bypasses the inverse-Hessian computation, providing a framework that retains the principled foundation of influence-based methods while achieving superior computational efficiency. 

\paragraph{Improving Excess-Loss Paradigms:} Excess-loss methods select training samples by computing the loss gap between a proxy model and a reference model trained solely on validation data. As outlined in Table~\ref{tab:criterion_comparison}, BLADE recovers this highly tractable scoring mechanism but fundamentally improves the reference architecture. Standard excess-loss approaches (like RHO-1) rely on a static reference model trained \textit{a priori}, which inherently stagnates and loses its discriminative power as the proxy model evolves during training. In contrast, BLADE introduces a dynamic reference model that periodically synchronizes with the proxy model. This dynamism ensures the reference remains competitive and accurately calibrated throughout the learning trajectory, mitigating the risk of lagging.

%% file: sections/related.tex
\section{Related Work}
\label{sec:related}
Data selection is attracting increasing attention as a pivotal strategy for developing LLMs with comprehensive and balanced capabilities. Our work builds upon two primary strands of research: data selection paradigms and bi-level optimization.

\paragraph{Data Selection} 
Early approaches to data selection primarily relied on static heuristics~\cite{raffel2020T5,rae2021gopher}, diversity sampling~\cite{tirumala2023d4}, and distribution matching~\cite{xie2023dsir} to filter out noisy or out-of-domain data. These methods, nevertheless, fail to align with the target tasks or the model's specific training dynamics. 
To achieve model-aware selection, recent advancements have predominantly bifurcated into influence-based and excess-loss-based paradigms. Influence-based methods leverage the influence function~\cite{koh2017influence} to quantify a sample's exact impact on validation loss. While theoretically principled, computing the inverse Hessian-vector products is intractable at the LLM scale. Consequently, methods like LESS~\cite{xia2024less} and G-DIG~\cite{pan2024gdig} resort to gradient matching or layer-wise approximations, while MATES~\cite{yu2024mates} trains a lightweight surrogate data influence predictor. Alternatively, excess-loss-based methods~\cite{mindermann2022rho}  (and their token-level extension, RHO-1~\cite{lin2024rho1}) avoid this computational bottleneck by scoring data based on the loss gap between a proxy model and a well-trained reference model. Yet, their reliance on an \textit{a priori trained}, static reference model inherently upper-bounds the target model's capability as it evolves.

\begin{wraptable}{r}{6cm}
\caption{Complexity of different methods.}
\label{tab:computational_complexity}
\centering
\begin{tabular}{@{}lc@{}}
\toprule
Method       & Training Complexity                                                                                                                                               \\ \midrule
Vanilla Train      & $C T $      \\
MATES       &  $\left( \gamma +  \frac{T_{\text{probe}} \times T_{\text{val}}}{\tau} \right) CT$  \\
RHO-1          & $ \left(\frac{2\gamma}{3} + \frac{2}{3}\right) C T $ \\
BLADE     & $ \left(\frac{2}{3} \gamma + \frac{\gamma K}{\tau}+ \frac{2}{3} \right)C T $ \\ \bottomrule
\end{tabular}
\end{wraptable}

Comparison of the computational complexity of these strategies is given in Table~\ref{tab:computational_complexity}~\footnote{We assume backward takes 2$\times$ computation as the forward process.}, where $C$ is the complexity of one step training of a model. $T$ is the update number of $\boldsymbol{\alpha}$. We explicitly unpack the dominant overhead of MATES as $\frac{T_{\text{probe}} \times T_{\text{val}}}{\tau} $, which represents the multiplicative cost of oracle data influence collection. Specifically, $T_{\text{probe}}$ denotes the number of probing samples, and $T_{\text{val}}$ is the number of steps to evaluate each sample's influence on the entire validation set. This nested complexity arises because MATES requires a full pass over the validation data to label its surrogate model for every probing sample, making it inapplicable for token-level selection. Typically, for relatively large validation data, the overall computational cost of these methods is of the order: MATES > BLADE $\approx$ RHO-1. 

\paragraph{Bi-level Optimization}

Bi-level optimization provides a rigorous mathematical framework in many scientific disciplines, yet solving it remains challenging due to the nested dependencies between the upper and lower-level problems. Standard gradient-based bi-level solvers~\cite{dagreou2022bilevel,choe2024metalearning,shaban2019tuncated,grazzi2023bilevel,chen2021tight,hong2023twotimescale} require estimating implicit gradients, introducing Hessian-vector products (HVPs) that are computationally prohibitive for large-scale neural networks. To mitigate this, recent works like BLISS~\cite{hao2026bliss} employ very small proxy models for HVP estimation, while PBCS~\cite{zhou2022pbilevel} utilizes policy gradients to bypass HVPs entirely, though it requires training the inner problem to convergence at every outer step. More recently, penalized methods~\cite{shen23onpenalty,kwon2023f2sa,kwon2024onpenalty} have pioneered reformulating the inner-level problem into a regularized penalty, enabling pure first-order gradient-based optimization. While theoretically elegant, their practical application and scalability within massive LLM pre-training pipelines have remained largely unexplored. BLADE bridges this gap, adapting the penalized bi-level formulation into a highly scalable, Hessian-free framework specifically tailored for token selection.

%% file: sections/experiment.tex
\section{Experiments}
\label{sec:experiments}
In this section, we compare BLADE to state-of-the-art algorithms in Section~\ref{subsec:compare_sota}. Then we analyse the effectiveness of each design ingredient in Section~\ref{subsec:analysis}. Code will be released upon acceptance. 

\subsection{Experimental Setup}
\label{subsec:exp_setup}

We consider two distinct continual pre-training scenarios: domain specialization (domain-shift) and general capability enhancement. Complete hyper-parameter settings are provided in Appendix~\ref{sec:hyper_params}.

\paragraph{Domain-Shift pre-training:}  To evaluate BLADE's ability to drive domain specialization, we follow~\cite{lin2024rho1} and utilize a blend of MetaMath~\cite{yu2024metamath} and Mammoth~\cite{yu22024mammoth}~\footnote{This is a subset of the validation data used in RHO-1~\cite{lin2024rho1}, the other data used as validation in RHO-1 are not open-sourced.} as the validation set. The candidate training pool is a 5B randomly sampled subset of OpenWebMath~\cite{paster2024openwebmath} ($\sim$14B math-related web tokens). We deliberately employ foundational base models TinyLlama-1.1B~\cite{zhang2024tinyllama} and LLaMA-2-7B~\cite{touvron2023llama2} rather than the heavily optimized SOTA models to ensure that pre-existing, saturated capabilities do not mask the intrinsic efficacy of data selection. Following~\cite{lin2024rho1}, selection is executed at the token level with a keeping ratio of $\gamma = 0.6$. The reference model $\bw$ is synchronized periodically ($\tau = 1000$). Evaluations are conducted across various math benchmarks: \textbf{GSM8K}~\cite{cobbe2021gsm8k}, \textbf{MATH}~\cite{hendrycks2021math}, \textbf{SVAMP}~\cite{patel2021svamp}, \textbf{ASDIV}~\cite{miao2020asdiv}, \textbf{MAWPS}~\cite{kedziorski2016mawps}, \textbf{TabMWP (TAB)}~\cite{lu2023tab}, \textbf{MathQA (MQA)}~\cite{amini2019mathqa}, \textbf{MMLU-STEM}~\cite{hendrycks2020mmlu} and \textbf{SAT}~\cite{azerbayev2023sat}. This evaluation encompasses diverse question types like multiple-choice and open-ended questions. Experiments are conducted on eight NVIDIA Hopper H-100s.

\paragraph{General pre-training:} 
To verify that BLADE does not overfit to narrow domains and genuinely improves broad capabilities, we utilize OpenHermes-2.5~\cite{teknium2023openhermes} as the validation set. The candidate training pool is a 5B blend of OpenWebMath~\cite{paster2024openwebmath} and SlimPajama~\cite{soboleva2023slimpajama}. We evaluate the final trained proxy model on benchmarks prioritizing diversity of capabilities and formats, including \textbf{MMLU}~\cite{hendrycks2020mmlu}, \textbf{Hellaswag}~\cite{zellers2018hellaswag}, \textbf{OpenBookQA}~\cite{2018openbookqa}, \textbf{WinoGrande}~\cite{sakaguchi2020winogrande}, \textbf{ARC-Easy} and \textbf{ARC-Challenge}~\cite{clark2018arc}, \textbf{BoolQ}~\cite{clark2019boolq} and \textbf{PIQA}~\cite{bisk2020pika}. Following~\cite{lin2024rho1}, the evaluation is conducted using lm-evaluation-harness~\cite{gao2024lmevalharness}.

\subsection{Comparisons with State of the Arts}
\label{subsec:compare_sota}
We benchmark BLADE against robust baselines: \textbf{Base} (the un-trained original model), \textbf{Base-CT} (naive continual pre-training on the full candidate set), \textbf{Random} (training on randomly sampled data), \textbf{MATES}~\cite{yu2024mates} (a state-of-the-art influence-based method with an influence predictor), and \textbf{RHO-1}~\cite{lin2024rho1} (the leading token-level excess-loss-based method). For the baselines, we use the published MATES implementation~\footnote{\textbf{MATES:} https://github.com/cxcscmu/MATES.} and re-implemented RHO-1.

\paragraph{Domain-Shift Pre-training} 
During the evaluation, the models are prompted with few-shot chain of thought (CoT) as in~\cite{lin2024rho1}. In Table~\ref{tab:sota_math} (Upper), we see that BLADE significantly outperforms baselines. For instance, it achieves 29.7 averaged accuracy on TinyLlama-1.1B, surpassing the most competitive RHO-1 by 4.0 and the Base-CT baseline by 13.0.  We inspect the scalability of BLADE with a larger llama2-7B model.  BLADE consistently outperforms the baselines on 7 out 9 benchmarks, and is comparable on MMLU-STEM. SAT is a very small benchmark, containing only 32 questions, so the accuracy on SAT has high variation.

For wall-clock time, BLADE (494.3 min) takes longer than naive continual training (314.6 min), a necessary trade-off to break the performance ceiling. Yet, among model-aware methods, BLADE is highly efficient: it adds merely 62.8 minutes over the static RHO-1 and is significantly faster than MATES. To prevent doubling peak VRAM, BLADE employs an offloading mechanism, periodically loading the reference model $\bw$ to pre-compute scores for $\tau$ steps before unloading it, strictly maintaining the peak VRAM of standard single-model training.

\paragraph{General Pre-training}
The evaluation results for the general continuous pre-training scenario are summarized in Table~\ref{tab:sota_general}. BLADE consistently outperforms competing data selection baselines in 7 out of 8 diverse downstream tasks. Notably, it achieves average accuracy of 51.73$\%$, establishing a clear margin of +1.15 over the state-of-the-art RHO-1 method. Despite a small fluctuation on the OBQA task, the substantial gains across tasks validate BLADE's efficacy.

\begin{table}[]
\caption{Comparison for the CoT reasoning results of math pre-training. Both the smaller TinyLLama-1.1B model~(Upper)  and the larger Llama2-7B~(Lower) are considered. Results on 9 downstream tasks, the averaged accuracy, as well as the running wall clock time are reported. $\dagger$ denotes the results using our implementation. For a fair comparison, the reference model is of the same size as the proxy model in RHO-1. $*$ means sample-level selection method.}
\label{tab:sota_math}
\setlength\tabcolsep{2.8pt}
\small
\centering
\begin{tabular}{@{}lcccccccccccc@{}}
\toprule
               & \multicolumn{1}{l|}{\begin{tabular}[c]{@{}l@{}}Train\\ Toks.\end{tabular}} & GSM8K & MATH & SVAMP & ASDiv & MAWPS & TAB  & MQA  & \begin{tabular}[c]{@{}c@{}}MMLU\\ STEM\end{tabular} & SAT  & Avg. & Time$_\text{(min)}$ \\ \midrule
\multicolumn{13}{c}{\textit{\textbf{Smaller Models :  TinyLlama-1.1B}}}                                                                                                                                                               \\ \midrule
Base      & \multicolumn{1}{c|}{-}                                                    & 2.9   & 3.2  & 11.0  & 18.1  & 20.4  & 12.5 & 14.6 & 16.1                                                & 21.9 & 13.4 &  - \\
Base-CT   & \multicolumn{1}{c|}{5B}                                                  & 3.7   & 3.6  & 17.6  & 27.8  & 35.0    & 17.1 & 9.1   &  18.0                                                & 18.8 & 16.7  &  314.6 \\
Random   & \multicolumn{1}{c|}{3B}                                                  & 3.6   & 3.6  & 14.6  & 23.7  & 30.1  & 12.1 & 11.4 & 18.8                                               & 14.7 & 14.7 &  188.8 \\
MATES$^*$ & \multicolumn{1}{c|}{3B}                                                   & 4.2  & 4.8  & 17.4   &  28.1   & 38.3   & 16.0  & 11.9  & 13.6                                                 &  \textbf{25.0}    & 17.7  & 831.5 \\
RHO-1$\dagger$ & \multicolumn{1}{c|}{3B}                                                   & 14.6  & 4.8    & 34.9  & 46.7   &  61.1   & 17.0   & 16.0   &  20.8                                                  & 15.6  &  25.7   &  431.5 \\
BLADE         & \multicolumn{1}{c|}{3B}                                                   & \textbf{20.8}  & \textbf{8.6}  & \textbf{35.7}  & \textbf{48.9}  & \textbf{66.5}  & \textbf{21.9} & \textbf{17.2} & \textbf{22.4}                                                & \textbf{25.0} & \textbf{29.7} & 494.3\\ \midrule
\multicolumn{13}{c}{\textit{\textbf{Larger Models :  Llama2-7B}}}                                                                                                                                                                  \\ \midrule
Base      & \multicolumn{1}{c|}{-}                     &  14.0     &  3.6     &  39.5    &    51.7    &   63.5    &   30.9   &   12.4   &    32.7                               &   34.4    &   31.4   & - \\
Base-CT   & \multicolumn{1}{c|}{5B}                    &    18.1     &  4.4     &  36.4    & 52.7      &  66.1      & 37.3   &   18.5    &   33.4                        & 37.5     &  33.8    &  1670.3 \\
Random   & \multicolumn{1}{c|}{3B}                     & 17.3   &  5.2    &  36.9    &   51.9    &  64.2   &  40.2   &  17.1  & 32.9                          &   25.0   &  32.3 &  1011.0 \\
MATES$^*$ & \multicolumn{1}{c|}{3B}                                                   & 16.5  & 4.4  & 35.5   & 51.6   & 65.7   & 38.6  & 15.8  & \textbf{33.6}                                                 &  \textbf{38.6}    & 33.4   & 3274.7 \\
RHO-1$\dagger$ & \multicolumn{1}{c|}{3B}             &   33.0    &   7.6   &   57.9    &    60.6    &   78.9     &  48.4     &   24.9    &     30.1           &  28.1    &    41.1   &   2140.7 \\
BLADE         & \multicolumn{1}{c|}{3B}              &    \textbf{40.4}   &   \textbf{9.8}     &     \textbf{60.8}    &   \textbf{66.1}      &    \textbf{84.9}   &  \textbf{49.6}     &   \textbf{27.3}     &    33.1       &  21.9      &    \textbf{43.8}   &  2375.1 \\ \bottomrule
\end{tabular}
\end{table}

\begin{table}[]
\caption{Comparison for the TinyLlama-1.1B model on general continuous pre-training. Results on 8 downstream tasks, as well as the averaged accuracy are reported. $\dagger$ denotes the results using our implementation. }
\label{tab:sota_general}
\small
\begin{tabular}{@{}lc|ccccccccc@{}}
\toprule
             & \multicolumn{1}{l|}{\begin{tabular}[c]{@{}l@{}}Train \\ Toks.\end{tabular}} & MMLU  & \begin{tabular}[c]{@{}c@{}}Hella\\ swag\end{tabular} & OBQA  & \begin{tabular}[c]{@{}c@{}}Wino\\ Grande\end{tabular} & ARC-C                        & ARC-E & BoolQ & PIQA  & Avg.   \\ \midrule
Base    & -                                                                           & 25.31 & 59.20                                                & \textbf{36.00} & 59.12                                                 & 30.12                        & 55.25 & 57.83 & 73.29 & 49.52 \\
Base-CT & 5B                                                                          & 25.42 & 58.12                                                & 35.00 & {\color[HTML]{333333} 60.46}                          & {\color[HTML]{333333} 31.48} & 55.93 & 60.06 & 72.52 & 49.87 \\

Random       & 3B                                                                          & 25.35 & 58.41                                                & 34.60 & 59.83                                                 & 31.06                        & 55.39 & 60.24 & 71.87 & 49.59 \\
MATES$^*$        & 3B                                                                          & 24.87 & 58.51                                                &   35.40    & 60.14                                                 & 31.06                        & 57.49 & 62.69 & 72.85 & 50.37 \\

RHO-1        & 3B                                                                          & 25.49 & 60.12                                                & \textbf{36.00} & 60.62                                                 & 31.23                        & 55.05 & 62.94 & 73.18 & 50.58 \\
BLADE       & 3B                                                                          & \textbf{27.26} & \textbf{60.89}                                                & 34.80 & \textbf{61.25}                                                 & \textbf{32.59}                        & \textbf{58.71} & \textbf{64.28} & \textbf{74.10} & \textbf{51.73} \\ \bottomrule
\end{tabular}
\end{table}

\subsection{Analysis}
\label{subsec:analysis}
To evaluate the effectiveness of each design component, we conduct ablation experiments. The model used is the TinyLlama-1B~\cite{zhang2024tinyllama} unless specified and we test on the domain-shift pre-training scenario. Due to limited space, we focus on the effect of the dynamic reference model, the effect of reference model update interval $\tau$ and the keeping ratio $\gamma$, and the synchronization mechanism of $\bw$ and $\bu$. We also discuss the reusability of the learned data orchestration on models of larger sizes in Appendix~\ref{sec:additional_exps}.

\paragraph{The Effect of the dynamic reference model}
\begin{figure}[t]
\centering
\begin{minipage}{0.345\linewidth}
    \centering    
    \subfigure{
    \label{fig:oracle_loss}
    \includegraphics[width=1.0\textwidth]{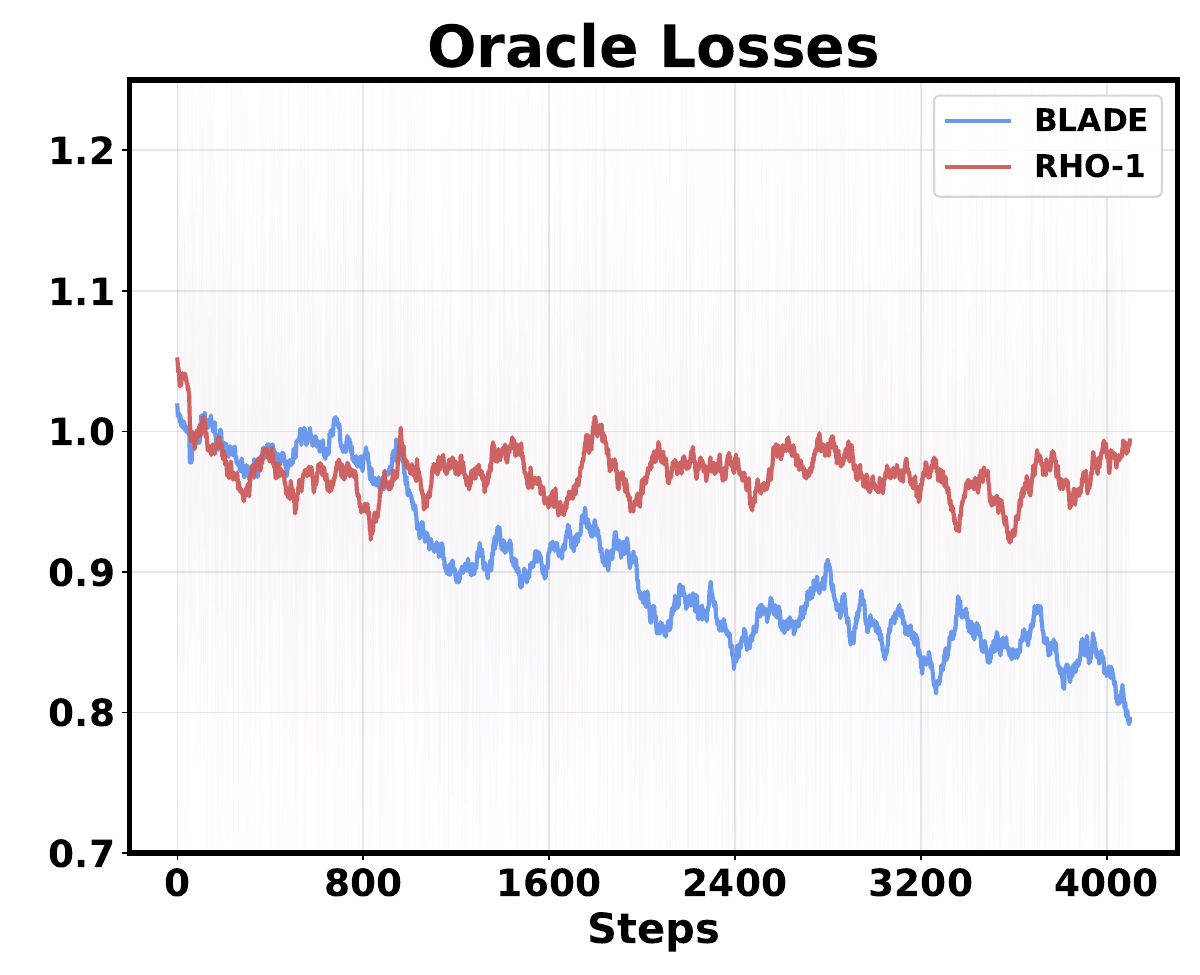}}
\end{minipage}
\begin{minipage}{0.315\linewidth}
    \centering
    \subfigure{	
    \label{fig:tau_effect}
    \includegraphics[width=0.98\linewidth]{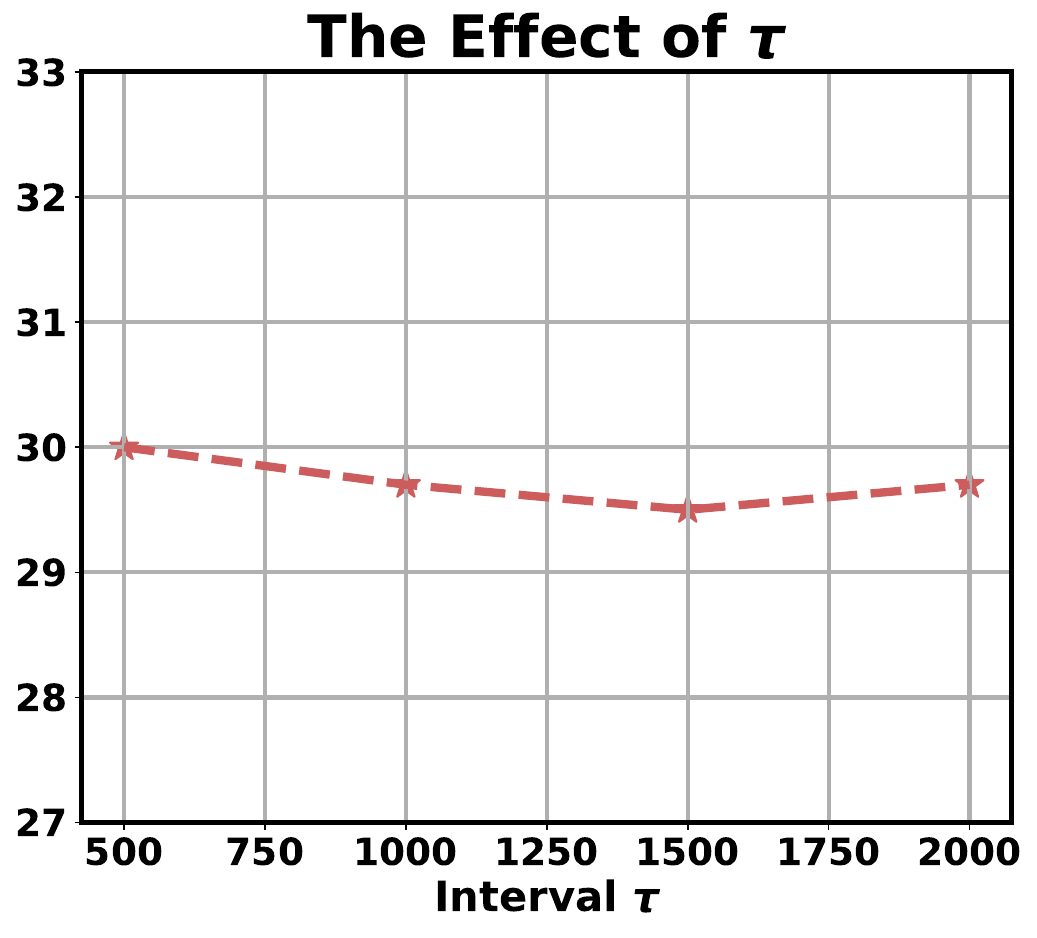}}
\end{minipage}
\begin{minipage}{0.315\linewidth}
    \centering
    \subfigure{	
    \label{fig:gamma_effct}
    \includegraphics[width=1.0\linewidth]{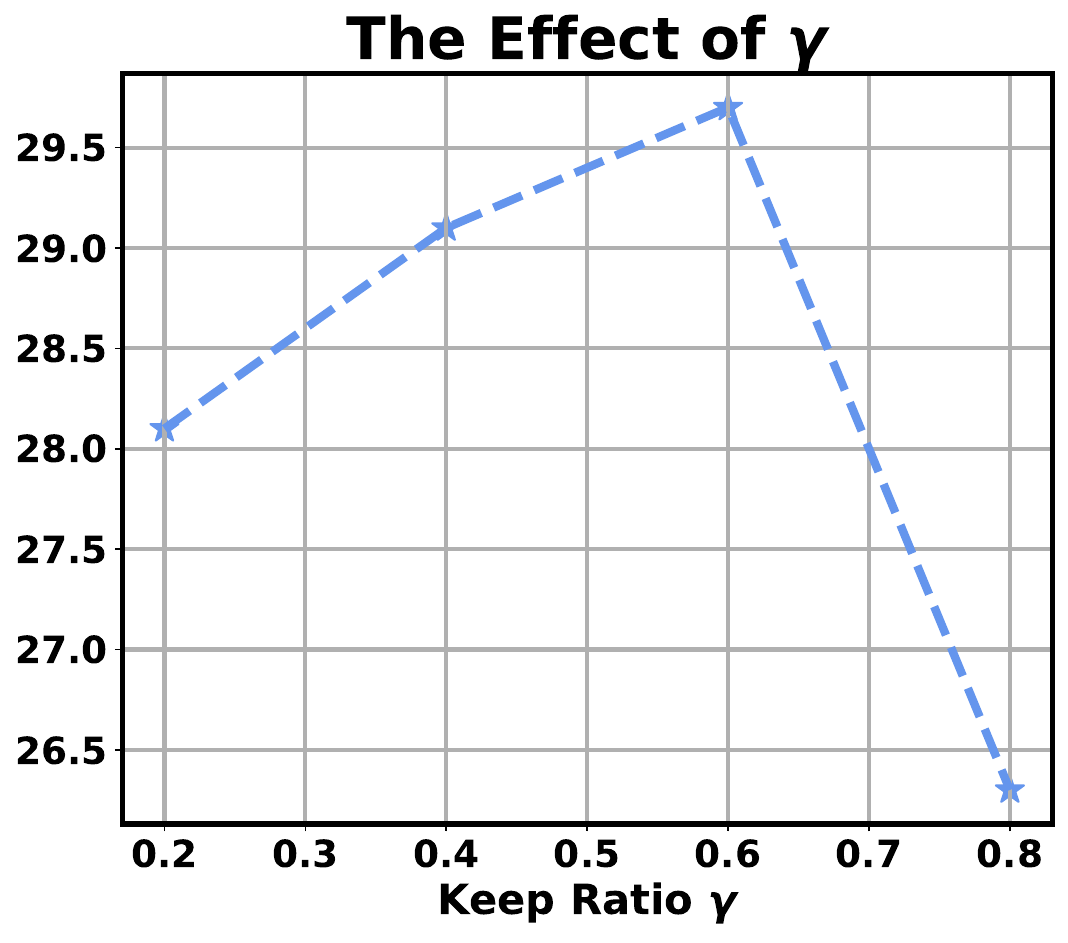}}
\end{minipage}
\caption{(a) The mean Llama2-70B oracle loss on the selected tokens. (b) The averaged accuracy when the reference model update interval $\tau$ changes. (c) The averaged accuracy when the keeping ratio $\gamma$ changes.}
\label{fig:mixture_ratio_evolve}
\end{figure}

\begin{figure}[t]
\centering
\includegraphics[width=0.98\linewidth]{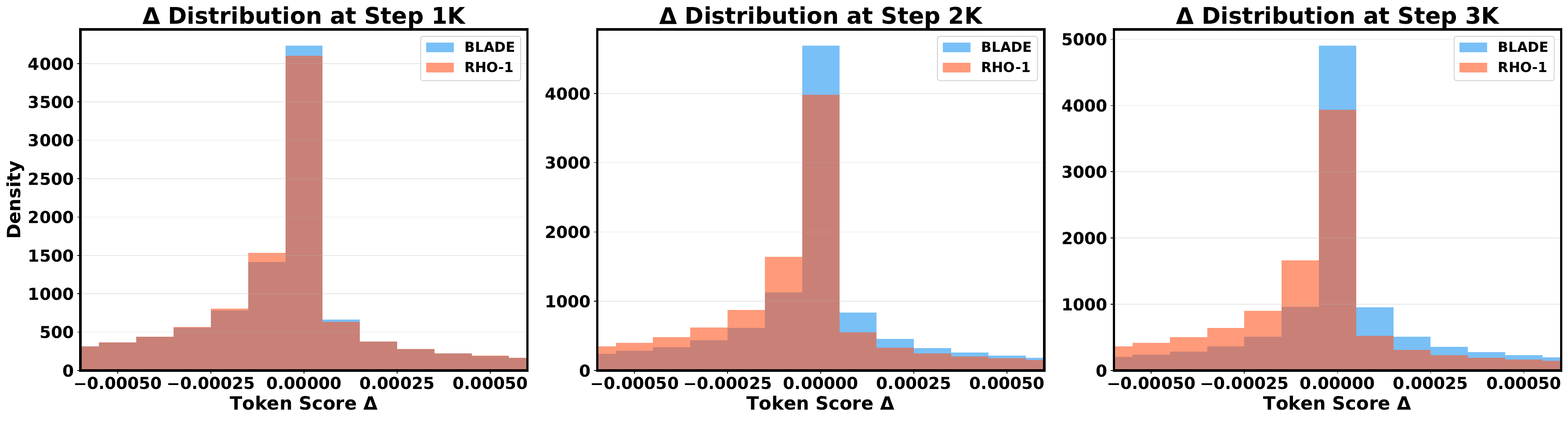}
\caption{Token-level score ($\Delta$) distributions at steps 1K, 2K, and 3K. While RHO-1's scores collapse toward non-positive values due to its static reference, BLADE's dynamic update maintains a pronounced right tail ($\Delta > 0$). This preserves BLADE's discriminative power to consistently select tokens with positive marginal utility.}
\label{fig:delta_distribution}
\end{figure}

To empirically validate the dynamic reference mechanism, we trace the selected tokens using a 70B oracle loss (Figure~\ref{fig:oracle_loss}) and monitor the evolution of their token-level excess loss ($\Delta$) distributions (Figure~\ref{fig:delta_distribution}). Initially (0–1K steps), both RHO-1 and BLADE exhibit high oracle loss as they aggressively select domain-rich web data to bridge the domain gap. The critical divergence occurs at Step 1000, coinciding with BLADE's first reference model update. As the proxy model quickly masters foundational domain basics, RHO-1's static reference lags. Figure~\ref{fig:delta_distribution} illustrates RHO-1's $\Delta$ distribution collapsing toward zero and negative values, losing its discriminative power and causing its oracle loss to stagnate. Conversely, BLADE recalibrates the reference against the proxy's growing capabilities, restoring a pronounced right-tail distribution ($\Delta > 0$). Guided by this calibrated signal, BLADE continuously targets tokens with genuine positive marginal utility, seamlessly shifting its curriculum from raw vocabulary acquisition to isolating clean, highly structured reasoning tokens, as confirmed by the descent in its oracle loss.

\paragraph{The Effect of $\tau$ and $\gamma$:}

We systematically evaluate the impact of the reference update interval ($\tau \in \{500, 1000, 1500, 2000\}$). As shown in Figure~\ref{fig:tau_effect}, downstream performance remains highly consistent across all tested values. This insensitivity carries significant practical implications: practitioners can safely adopt a larger $\tau$ (e.g., 2000) to reduce the computational overhead of maintaining the dynamic reference without sacrificing selection quality. For the keeping ratio, we evaluate BLADE across ($\gamma \in \{0.2, 0.4, 0.6, 0.8\}$). As illustrated in Figure~\ref{fig:gamma_effct}, the model's average accuracy first increases as $\gamma$ goes up to 0.6. Nevertheless, increasing the budget to $\gamma = 0.8$ degrades performance, validating our core premise: an overly high keeping ratio inevitably forces the inclusion of uninformative or redundant noise, degrading the model's capabilities.

\paragraph{The Effect of Synchronizing $\bu$ and $\bw$}
\begin{figure}[t]
\centering
\begin{minipage}{0.6\linewidth}
    \centering
    \small
    \subfigure[$ \operatorname{Dist}(\boldsymbol{u}, \boldsymbol{w})$ with synchronization.]{
    \label{fig:dist_uw}
    \includegraphics[width=0.98\textwidth]{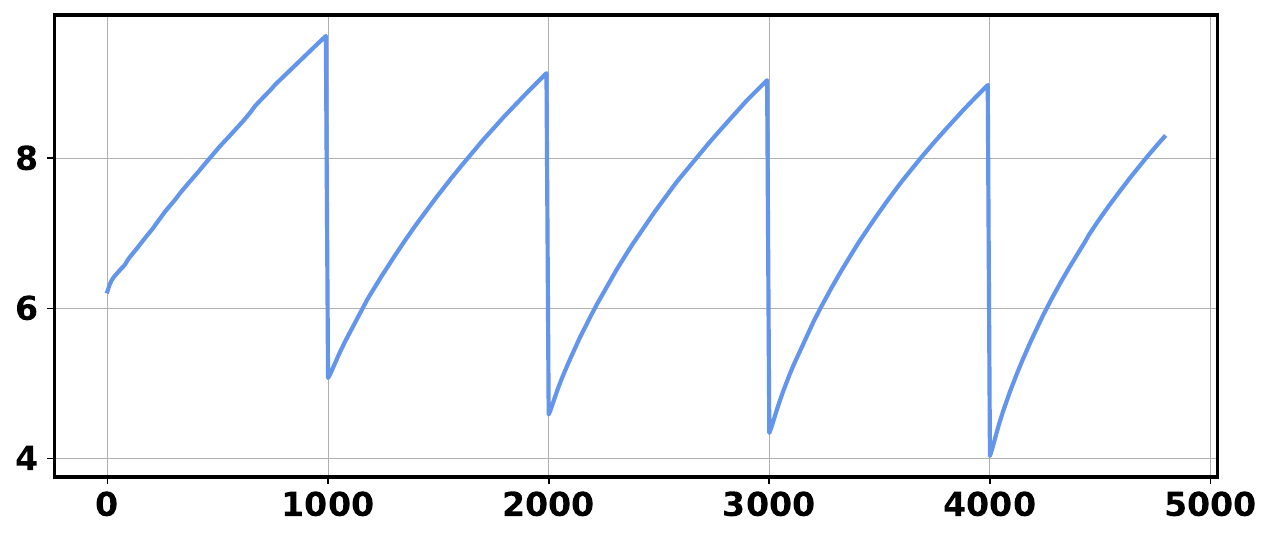}}
\end{minipage}
\begin{minipage}{0.36\linewidth}
	\centering
        \small
 	\subfigure[$\operatorname{Dist}(\boldsymbol{u}, \boldsymbol{w})$  w/o synchronization.]{	
     \label{fig:dist_uw_rho1}
    \includegraphics[width=0.9\linewidth]{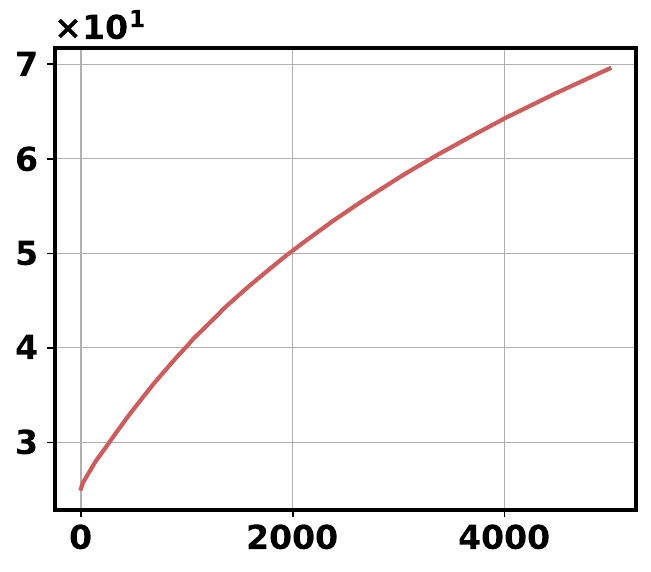}}
\label{fig:dist}
\end{minipage}
\caption{Trajectory of $\text{Dist}(u, w)$. BLADE maintains the structural constraint of the bi-level objective (Equation~\eqref{eq:bi-level-penalty}) through periodic synchronization, whereas RHO-1 exhibits inevitable divergence. }
\label{fig:synchronizing_uw}
\end{figure}

One obvious characteristic of BLADE is the periodic synchronization of the proxy model $\bu$ and the reference model $\bw$ ($w_0^{(e)} = u^{(e\tau)}$). We investigate this by tracking the parameter distance $\text{Dist}(\bu, \bw) = ||\bu - \bw||_2$ during training. As shown in Figure~\ref{fig:dist_uw}, synchronization bounds the disparity between $\bu$ and $\bw$ throughout the selection process. In contrast, maintaining them independently (as in RHO-1) incurs blown-up divergence (Figure~\ref{fig:dist_uw_rho1}). Crucially, this bounded drift effectively restricts the optimization space, allowing our framework to enforce the bi-level constraint using a moderate, numerically stable penalty parameter in practice, thereby circumventing the gradient explosion typically associated with $\lambda \to \infty$ in continuous theory. 

%% file: sections/conclusion.tex
\section{Conclusion}
\label{sec:conclusion}

In this paper, we propose BLADE, a principled, efficient, and versatile data selection framework for large language model training.
Starting from a bi-level formulation, BLADE reformulates the intractable bi-level objective into a penalized single-level surrogate via Lagrange multipliers, thereby eliminating the need for inverse-Hessian computation while retaining the theoretical grounding of influence-based methods. This reformulation also reveals a clear connection to excess-loss-based selection, but crucially replaces the static reference model with a dynamic validation-informed reference model that remains synchronized with the evolving proxy model, avoiding the calibration lag that limits prior approaches. Theoretically, BLADE converges to a first-order stationary point of the bi-level objective at a rate of $\mathcal{O}(T^{-1/4})$. We further instantiated BLADE as an efficient online selection algorithm through a memoryless randomized block-coordinate Frank-Wolfe procedure, enabling practical token-level selection at LLM scale. Extensive experiments and analysis are conducted to demonstrate the effectiveness of our approach. Overall, BLADE provides a principled and computationally viable path for dynamic data selection in large-scale LLM training.

%% file: sections/appendix.tex
\newpage
\onecolumn

\section{Convergence of Bi-level Data Selection}
\label{sec:convergence}
 Here we present a theoretical analysis of the proposed BLADE. The original bi-level optimization problem 
\begin{equation}
\label{eq:original}
    \cP: \quad \min_{\boldsymbol{\alpha} \in \mathcal{A}} \cL_{\val}\left(\bw^{*}(\balpha)\right) \ \  \text { s.t. } \bw^{*}(\balpha)\in S^{*}(\balpha) := \arg\min_{\boldsymbol{w}} \mathcal{L}_{\train}(\balpha, \bw)
\end{equation}

is difficult to solve due to the imposed hard constraint. We turn to the Lagrangian problem of $\cP$ such that 
\begin{equation}
\label{eq:lagrange}
    \cP_{\lambda}: \quad \min_{\balpha\in\cA, \bw}\mathcal{L}_{\val}\left(\bw\right) + \lambda \left(\cL_{\train}(\balpha, \bw) - \min_{\bu}\mathcal{L}_{\train}(\balpha, \bu)\right)
\end{equation}

There has been a vast body of existing literature~\citep{shen23onpenalty,kwon2023f2sa} discussing the relationship between the Lagrangian problem $\cP_{\lambda}$ and the original problem $\cP$. 

Specifically, by employing a sufficiently large penalty $\lambda$, the solution to $\cP_{\lambda}$ closely approximates the exact solution to $\cP$. Consequently, we can solve the original problem $\cP$ by optimizing the surrogate $\mathcal{P}_\lambda$, as we did in this paper.

\subsection{The Proposed Algorithm}
\begin{algorithm}[ht]
\caption{Bi-Level Adaptive Data sElection (BLADE)}
\label{alg:workflow}
\footnotesize
\begin{algorithmic}[1]
\REQUIRE ~~\\
    Training dataset $\mathcal{D}_{\text{train}}$, 
    Validation dataset $\mathcal{D}_{\text{val}}$. \\
    Total training steps $T$, 
    Reference model update interval $\tau$, 
    Keeping ratio $\gamma \in (0,1]$. \\
    Learning rate $\eta_{\boldsymbol{u}}$ and $\eta_{\boldsymbol{w}}$ for $\boldsymbol{u}$ (proxy) and $\boldsymbol{w}$ (reference) respectively.  

\STATEx // \textit{Warm-up Phase:} \\
\STATE Initialize proxy model and train it for a few steps to obtain a stabilized initial state $\bu^{(0)}$.

\FOR{$t = 0$ \textbf{to} $T-1$}
    \STATEx // \textit{Periodic Reference Model Update.} \\
        \IF{$t \mod \tau == 0$}
            \STATE $e \leftarrow t / \tau$; \ \  $\boldsymbol{w}_{0}^{(e)} \leftarrow \boldsymbol{u}^{(t)}$.  \\
            \FOR{$k = 0$ \textbf{to} $K-1$}
                \STATE $\boldsymbol{w}^{(e)}_{k+1} = \boldsymbol{w}^{(e)}_{k} - \eta_{\boldsymbol{w}} \left( \nabla_{\boldsymbol{w}} \mathcal{L}_{\text{val}}\left(\boldsymbol{w}^{(e)}_k \right) + \lambda\nabla_{\boldsymbol{w}}\mathcal{L}_{\text{train}}\left(\boldsymbol{\alpha}^{(e \tau)}, \boldsymbol{w}^{(e)}_k \right)  \right)$. 
            \ENDFOR
            \STATE $\tilde{\bw}^{(e)} \leftarrow \boldsymbol{w}^{(e)}_{K}$
        \ENDIF
    \STATEx// \textit{Batch Data Selection.} \\
    \STATE Sample candidate batch: $\mathcal{B}^{(t)} \sim \mathcal{D}_{\text{train}}$. 
    \STATE For each selection unit $ n \in \mathcal{B}^{(t)}$, optimize $\balpha_n = I_{\text{Top}-\gamma |\mathcal{B}^{(t)}|} \left( \mathcal{L}_{\text{train}}^n(\boldsymbol{u}^{(t)}) - \mathcal{L}_{\text{train}}^n(\tilde{\bw}^{(e)}) \right) $.
    \STATEx // \textit{Online Model Update.} \\
        \STATE $\boldsymbol{u}^{(t+1)} = \boldsymbol{u}^{(t)} - \eta_{\boldsymbol{u}} \cdot \nabla_{\boldsymbol{u}} \frac{1}{\sum \balpha_n} \sum_{n \in \mathcal{B}^{(t)}} \balpha_n \mathcal{L}_{\text{train}}^n(\boldsymbol{u}^{(t)})$
\ENDFOR
\ENSURE Trained model $\boldsymbol{u}^{(T)}$
\end{algorithmic}
\end{algorithm}

A full workflow of Bi-Level Adaptive Data sElection (BLADE) is detailed in Algorithm~\ref{alg:workflow}. BLADE reformulates the complex data selection problem into a penalized single-level optimization process. Operating directly on incoming streams of candidate data, the algorithm maintains two structurally synchronized models: a proxy model $\bu$ (serving as the target LLM) and a reference model $\bw$. To evaluate data efficacy dynamically, BLADE computes the loss gap between the reference and proxy models. As the proxy model $\bu$ is trained on $\mathcal{D}_{\text{train}}$ while the reference model $\bw$ is trained on both the train and validation data, the gap between $\bu$ and $\bw$ quantifies the expected gain of incorporating a token towards the target distribution. The tokens with the  Top-$\gamma$ loss gap are selected subsequently. Notably, the proxy model $\bu$ and reference model $\bw$ are synchronized at the beginning of the periodic update of $\bw$ in BLADE.

\subsection{Convergence Rate}

Next, we establish the convergence rate of the proposed Algorithm~\ref{alg:workflow}. We first establish the following definitions to simplify the notations. We denote 
\begin{equation*}
	\cH_{\lambda}(\balpha, \bw) = \cL_{\val}(\bw) + \lambda\left(\cL_{\train}(\balpha, \bw) - \min_{\bw}\cL_{\train}(\balpha, \bw)\right), 
\end{equation*}
with $S_{\lambda}^{*}(\balpha) = \{\bw: \bw \in \arg\min_{\bw}\cH_{\lambda}(\balpha, \bw)\}$, and $S^{*}(\balpha) = \{\bw:\bw\in\arg\min_{\bw}\cL_{\train}(\balpha, \bw)\}$. We further define 
\begin{equation}
	\cH(\balpha) = \inf_{\bw\in S^{*}(\balpha)}\cL_{\val}(\bw), 
\end{equation}
The minimum of $\cH(\balpha)$ is exactly the solution to original problem $\cP$ \eqref{eq:original}. We then impose the following standard regularity conditions to derive the convergence rate:

\begin{assumption}[PL condition]\label{ass:pl condition}
	For any $\balpha\in\cA$, both $\cL_{\train}(\balpha, \bw)$ and $\cH_{\lambda}(\balpha, \bw)$ satisfy the Polyak-\L{}ojasiewicz (PL) inequality with coefficient $\mu$ and $\mu_{\lambda}$, respectively. That says:
	\begin{equation}
		\cL_{\train}(\balpha, \bw) - \min_{\bw}\cL_{\train}(\balpha, \bw) \leq \frac{1}{2\mu}\left\|\nabla_{\bw}\cL_{\train}(\balpha, \bw)\right\|^{2}
	\end{equation}
	and 
	\begin{equation}
		\cH_{\lambda}(\balpha, \bw) - \min_{\bw}\cH_{\lambda}(\balpha, \bw) \leq \frac{1}{2\mu_{\lambda}}\|\nabla\cH_{\lambda}(\balpha, \bw)\|^{2}.
	\end{equation}
	Moreover, the coefficient $\mu_{\lambda}$ satisfies $\lim_{\lambda\to\infty}\frac{\mu_{\lambda}}{\lambda} = 1$.
\end{assumption}  
\begin{assumption}[Smoothness]\label{ass:smoothness}
	For any $\balpha\in\cA$, 
	\begin{enumerate}
		\item Both $\nabla_{\bw}\cL_{\train}(\balpha, \bw)$ and $\nabla_{\bw}\cL_{\val}(\bw)$ are Lipschitz continuous to $\bw$ on coefficient $L$.
		\item Both $\cL_{\train}(\balpha, \bw)$ and $\cL_{\val}(\bw)$ are Lipschitz continuous to $\bw$ with coefficient $B$.
	\end{enumerate}
\end{assumption}
\begin{assumption}[Bounded Hessian]\label{ass:bounded hessian}
	For any $\balpha\in\cA$, $\bw\in S^{*}(\balpha)$, there exists positive constants $\kappa, \rho$, satisfying  Hessian matrices $\nabla^2_{\bw\bw}\cL_{\train}(\balpha, \bw) \succeq \kappa$\footnote{For two matrices $\bA$, $\bA\succeq \kappa$ means $\bA - \kappa \bI$ is a positively semi-definite matrix.} and $\nabla_{\balpha\bw}^{2}\cL_{\train}(\balpha, \bw)\preceq \rho$. 
\end{assumption}
\begin{assumption}[Lipschitz Hessian]\label{ass:Lipchitz hessian}
	For any $\balpha\in\cA$, $\cL_{\train}(\balpha, \bw)$ is twice continuous differentiable, and the Hessian matrices  $\nabla_{\balpha\bw}^{2}\cL_{\train}(\balpha, \bw)$, $\nabla_{\bw\bw}^{2}\cL_{\train}(\balpha, \bw)$ are all Lipschitz continuous to $\bw$ with coefficient $H$.  
\end{assumption}
\begin{assumption}[Bounded Loss Function]\label{ass:bounded optima}
	The non-negative loss function $\cL_{\train}(\balpha, \bw)$, $\cL_{\val}(\bw)$ are uniformly bounded by positive constant $D$.  
\end{assumption}

Next, we present a lemma to characterize the gap between the gradient of $\nabla_{\balpha}\cH(\balpha)$ and $\nabla_{\balpha}\cH_{\lambda}(\balpha, \bw)$. 

\begin{lemma}[Equality of the bilevel-optimization and the regularized problem~\cite{wang2025tandem}]
\label{lem:gap between gradient}
	Under Assumptions \ref{ass:pl condition}-\ref{ass:Lipchitz hessian}, for any given $\balpha$ and $\bw_{\lambda}^{*}(\balpha)\in S_{\lambda}^{*}(\balpha)$, it holds 
	\begin{equation}
		\left\|\nabla_{\balpha}\cH(\balpha) - \frac{\partial}{\partial{\balpha}}\cH_{\lambda}(\balpha, \bw_{\lambda}^{*}(\balpha))\right\| \leq \frac{1}{\lambda}\left(\frac{HB^{2}}{\mu^{2}}\left(\frac{\rho}{\kappa} + 1\right) + \frac{\rho LB}{\mu\kappa}\right)
	\end{equation} 
\end{lemma}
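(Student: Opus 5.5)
We compare the two gradients term by term. The key fact is that at $\bw_\lambda^*(\balpha)$ both first-order conditions (for $\cH_\lambda$ and for the inner problem) are almost satisfied at the same time.

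\textbf{Step 1: the two gradients.} Danskin's theorem gives
\[
\frac{\partial}{\partial\balpha}\cH_\lambda(\balpha,\bw_\lambda^*)=\lambda\bigl(\nabla_\balpha\cL_\train(\balpha,\bw_\lambda^*)-\nabla_\balpha\cL_\train(\balpha,\bw^*)\bigr),
\]
where $\bw^*\in S^*(\balpha)$ is any inner minimizer. The value $\nabla_\balpha\cL_\train(\balpha,\bw^*)$ is the same for every inner minimizer, which can be argued from PL and Assumption~\ref{ass:bounded hessian}. Here $\bw^*$ should be chosen as the projection of $\bw_\lambda^*$ onto $S^*(\balpha)$. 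On the bilevel side, Assumption~\ref{ass:bounded hessian} ($\nabla^2_{\bw\bw}\cL_\train\succeq\kappa$ on $S^*$) lets the implicit function theorem apply locally. This gives
\[
\nabla_\balpha\cH(\balpha)=-\nabla^2_{\balpha\bw}\cL_\train(\balpha,\bw^*)\,[\nabla^2_{\bw\bw}\cL_\train(\balpha,\bw^*)]^{-1}\nabla\cL_\val(\bw^*),
\]
which is the hyper-gradient formula of Appendix~\ref{sec:hyper-gradient}.

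\textbf{Step 2: distance of $\bw_\lambda^*$ to $S^*$.} Stationarity of $\cH_\lambda$ in $\bw$ gives $\nabla\cL_\val(\bw_\lambda^*)+\lambda\nabla_\bw\cL_\train(\balpha,\bw_\lambda^*)=0$. Hence $\|\nabla_\bw\cL_\train(\balpha,\bw_\lambda^*)\|\le B/\lambda$. PL for $\cL_\train$ gives quadratic growth, which yields
\[
\|\bw_\lambda^*-\bw^*\|\le \frac{1}{\mu}\|\nabla_\bw\cL_\train(\balpha,\bw_\lambda^*)\|\le \frac{B}{\mu\lambda}.
\]

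\textbf{Step 3: Taylor expansion.} Set $\delta=\bw_\lambda^*-\bw^*$. Assumption~\ref{ass:Lipchitz hessian} gives
\[
\nabla_\balpha\cL_\train(\balpha,\bw_\lambda^*)-\nabla_\balpha\cL_\train(\balpha,\bw^*)=\nabla^2_{\balpha\bw}\cL_\train(\balpha,\bw^*)\,\delta+r_1,\qquad \|r_1\|\le \tfrac H2\|\delta\|^2,
\]
and similarly
\[
\nabla_\bw\cL_\train(\balpha,\bw_\lambda^*)=\nabla^2_{\bw\bw}\cL_\train(\balpha,\bw^*)\,\delta+r_2,\qquad \|r_2\|\le \tfrac H2\|\delta\|^2.
\]
From Step 2, $\nabla_\bw\cL_\train(\balpha,\bw_\lambda^*)=-\nabla\cL_\val(\bw_\lambda^*)/\lambda$. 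Solving the second relation for $\delta$ gives
\[
\lambda\delta=-\nabla^{2}_{\bw\bw}\cL_\train(\balpha,\bw^*)^{-1}\bigl(\nabla\cL_\val(\bw_\lambda^*)+\lambda r_2\bigr).
\]
Substituting into the Danskin expression:
\[
\frac{\partial}{\partial\balpha}\cH_\lambda=-\nabla^2_{\balpha\bw}\,[\nabla^2_{\bw\bw}]^{-1}\nabla\cL_\val(\bw_\lambda^*)\;-\;\nabla^2_{\balpha\bw}\,[\nabla^2_{\bw\bw}]^{-1}\lambda r_2\;+\;\lambda r_1 .
\]
Subtracting $\nabla_\balpha\cH(\balpha)$ leaves three error terms:
\begin{itemize}
\item $\nabla^2_{\balpha\bw}[\nabla^2_{\bw\bw}]^{-1}\bigl(\nabla\cL_\val(\bw_\lambda^*)-\nabla\cL_\val(\bw^*)\bigr)$, bounded by $\frac{\rho}{\kappa}L\|\delta\|\le\frac{\rho LB}{\mu\kappa\lambda}$;
\item $\lambda\|r_2\|\rho/\kappa$, bounded by $\frac{\rho}{\kappa}\frac{HB^2}{2\mu^2\lambda}$;
\item $\lambda\|r_1\|$, bounded by $\frac{HB^2}{2\mu^2\lambda}$.
\end{itemize}
Summing gives the claimed bound, even with a spare factor $1/2$ on the Hessian terms.

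\textbf{Main obstacle.} The delicate point is justifying that the Hessian bounds of Assumption~\ref{ass:bounded hessian}, which are stated only on $S^*(\balpha)$, can be used at the projection $\bw^*$. This is why we must expand around $\bw^*$ rather than around $\bw_\lambda^*$. A second subtlety is that $\nabla_\balpha\cH$ is well defined despite the possibly non-unique $S^*$. The strong-convexity-type bound $\succeq\kappa$ on $S^*$ makes the minimizers locally isolated, so we argue with the branch $\bw^*(\balpha)$ selected by projection.
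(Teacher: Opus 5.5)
Your proof is correct. The paper does not prove this lemma; it imports it from \cite{wang2025tandem} without argument, so there is no in-paper route to compare against. What you give is the standard penalty-gap derivation, as in the analysis of \cite{kwon2023f2sa}. Its ingredients are Danskin's theorem for $\partial_{\balpha}\cH_\lambda$, the PL error bound $\|\bw_\lambda^*-\bw^*\|\le B/(\mu\lambda)$ obtained from stationarity of $\bw_\lambda^*$, and Taylor expansions with $H$-Lipschitz remainders of $\nabla_{\balpha}\cL_{\train}$ and $\nabla_{\bw}\cL_{\train}$ around the projection $\bw^*$. Your three error terms reproduce the stated constant term by term, with a factor $1/2$ to spare on the two $H$-terms. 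You are right that the expansion must be centred at $\bw^*\in S^*(\balpha)$, because $\kappa$ and $\rho$ are only assumed there.

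The one point to tighten is your uniqueness remark. ``Locally isolated'' alone would not suffice. If $S^*(\balpha)$ contained several isolated points, $\cH(\balpha)=\inf_{S^*(\balpha)}\cL_{\val}$ would follow whichever branch minimizes $\cL_{\val}$, not necessarily the one nearest $\bw_\lambda^*$. Under the stated assumptions, however, $S^*(\balpha)$ is a singleton. Two distinct minimizers, each with $\nabla^2_{\bw\bw}\cL_{\train}\succeq\kappa$, create mountain-pass geometry. That geometry yields a sequence with vanishing gradient at a level strictly above $\min_{\bw}\cL_{\train}(\balpha,\bw)$, which PL forbids. With this observation, Danskin's formula, the implicit-function formula for $\nabla_{\balpha}\cH$, and your projection all refer to the same point, and the argument is complete.
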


From Lemma \ref{lem:gap between gradient}, we know that the gap between the gradients of original problem $\cH(\balpha)$ and its Lagrange version $\cH_{\lambda}(\balpha, \bw_{\lambda}^{*}(\balpha))$ can be extremely small by invoking penalty parameter $\lambda\to \infty$. This implies that we can directly utilize the gradient of the Lagrangian surrogate to perform gradient-based optimization. Next, we illustrate the Lipschitz continuity of $\bw_{\lambda}^{*}(\balpha)$ and $\bw^{*}(\balpha)$ to $\balpha$. 

\begin{lemma}[Continuity of $\bw^*$ and $\bw_{\lambda}^*$~\cite{wang2025tandem}]
\label{lem:continuous w}
	Under Assumptions \ref{ass:pl condition} and \ref{ass:smoothness}, for any $\balpha_{1}, \balpha_{2}\in \cA$, it holds 
	\begin{equation}
		\|\bw^{*}(\balpha_{1}) - \bw^{*}(\balpha_{2})\| \leq \frac{L}{\mu}\|\balpha_{1} - \balpha_{2}\|, 
	\end{equation}
	for any $\bw^{*}(\balpha_{1})\in S^{*}(\balpha_1)$ and $\bw^{*}(\balpha_{2})\in S^{*}(\balpha_{2})$ satisfies $\bw^{*}(\balpha_{2}) = \arg\min_{\bw\in S^{*}(\balpha_{2})}\|\bw - \bw^{*}(\balpha_{1})\|$.	On the other hand, it holds
	\begin{equation}
		\|\bw^{*}_{\lambda}(\balpha_{1}) - \bw_{\lambda}^{*}(\balpha_{2})\| \leq \frac{\lambda L}{\mu_{\lambda}}\|\balpha_{1} - \balpha_{2}\| 
	\end{equation}
	for any $\bw^{*}_{\lambda}(\balpha_{1})\in S_{\lambda}^{*}(\balpha_{1})$ and $\bw_{\lambda}^{*}(\balpha_{2}) = \arg\min_{\bw\in S_{\lambda}^{*}(\balpha_{1})}\|\bw - \bw^{*}_{\lambda}(\balpha_{1})\|$. 
\end{lemma}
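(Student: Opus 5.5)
The plan is to prove the two Lipschitz bounds of Lemma~\ref{lem:continuous w} in parallel. Each one uses the PL inequality in Assumption~\ref{ass:pl condition} to turn a gradient mismatch into a distance to the solution set. The key fact is that a PL function with constant $\mu$ satisfies the quadratic growth / error bound $\mathrm{dist}(\bw, S^{*}(\balpha)) \le \frac{1}{\mu}\|\nabla_{\bw}\cL_{\train}(\balpha,\bw)\|$. This follows from the standard argument that PL implies quadratic growth, $f(\bw)-f^{*} \ge \frac{\mu}{2}\mathrm{dist}^2(\bw,S^*)$; combining it with PL gives $\frac{\mu}{2}\mathrm{dist}^2 \le \frac{1}{2\mu}\|\nabla f\|^2$. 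I will state and use this as an auxiliary step, applied to $\cL_{\train}(\balpha_2,\cdot)$ with constant $\mu$ and to $\cH_{\lambda}(\balpha_2,\cdot)$ with constant $\mu_\lambda$.

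For the first bound, fix $\bw^{*}(\balpha_1)\in S^{*}(\balpha_1)$ and take $\bw^{*}(\balpha_2)$ to be its projection onto $S^{*}(\balpha_2)$, so that $\|\bw^{*}(\balpha_1)-\bw^{*}(\balpha_2)\| = \mathrm{dist}(\bw^{*}(\balpha_1), S^{*}(\balpha_2))$. The error bound then gives
\[
\|\bw^{*}(\balpha_1)-\bw^{*}(\balpha_2)\| \le \tfrac{1}{\mu}\|\nabla_{\bw}\cL_{\train}(\balpha_2,\bw^{*}(\balpha_1))\| = \tfrac{1}{\mu}\|\nabla_{\bw}\cL_{\train}(\balpha_2,\bw^{*}(\balpha_1)) - \nabla_{\bw}\cL_{\train}(\balpha_1,\bw^{*}(\balpha_1))\|,
\]
where the subtracted term is zero by stationarity. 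Since $\cL_{\train}$ is linear in $\balpha$, the remaining difference equals $\frac{1}{\gamma N}\sum_n(\balpha_{2,n}-\balpha_{1,n})\nabla\cL^n_{\train}$. I bound this by $L\|\balpha_1-\balpha_2\|$, reading the smoothness constant $L$ as also controlling the cross-Lipschitzness in $\balpha$; this is implicit in Assumption~\ref{ass:smoothness}, and via the per-token gradient bound $B$ the same bound follows up to constants. That yields the factor $L/\mu$.

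For the second bound I repeat the argument with $\cH_\lambda$ and its PL constant $\mu_\lambda$. The gradient of $\cH_\lambda$ in $\bw$ is $\nabla\cL_{\val}(\bw)+\lambda\nabla_{\bw}\cL_{\train}(\balpha,\bw)$, because the term $\min_{\bu}\cL_{\train}(\balpha,\bu)$ does not depend on $\bw$. The validation part is independent of $\balpha$ and cancels, so the $\balpha$-variation of the gradient is $\lambda$ times the training-gradient variation. This gives at most $\lambda L\|\balpha_1-\balpha_2\|$, and dividing by $\mu_\lambda$ gives $\frac{\lambda L}{\mu_\lambda}$.

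The main obstacle is the cross-Lipschitz step, that is, justifying $\|\nabla_{\bw}\cL_{\train}(\balpha_1,\bw)-\nabla_{\bw}\cL_{\train}(\balpha_2,\bw)\|\le L\|\balpha_1-\balpha_2\|$. The assumptions as written only state Lipschitzness in $\bw$. I would first try to derive it from linearity in $\balpha$ together with a uniform bound on per-token gradients. Failing that, I would note explicitly that Assumption~\ref{ass:smoothness} is meant jointly in $(\balpha,\bw)$, which is the reading in \cite{wang2025tandem}. A secondary point is the index mismatch in the statement, where $\bw_\lambda^*(\balpha_2)$ is defined via a projection onto $S^*_\lambda(\balpha_1)$. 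I would read this as $S^*_\lambda(\balpha_2)$, since otherwise the claim degenerates.
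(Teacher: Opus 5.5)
The paper does not prove this lemma; it imports it from \cite{wang2025tandem}. Your argument is the standard error-bound proof for statements of this kind, and it is correct.

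PL gives quadratic growth, and hence $\mathrm{dist}(\bw,S^*(\balpha))\le\frac{1}{\mu}\|\nabla_{\bw}\cL_{\train}(\balpha,\bw)\|$. Evaluating this at $\bw^*(\balpha_1)$, subtracting the vanishing gradient at $\balpha_1$, and bounding the $\balpha$-variation of $\nabla_{\bw}\cL_{\train}$ gives the first claim. The same template for $\cH_\lambda$ gives $\lambda L/\mu_\lambda$, since there $\nabla\cL_{\val}$ cancels and $\min_{\bu}\cL_{\train}(\balpha,\bu)$ does not depend on $\bw$. Your reading of $S^*_\lambda(\balpha_1)$ as $S^*_\lambda(\balpha_2)$ is the right one.

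The one thing to tighten is the cross-Lipschitz step. It is not implicit in Assumption~\ref{ass:smoothness}: the $\bw$-smoothness constant $L$ says nothing about how $\nabla_{\bw}\cL_{\train}$ moves with $\balpha$. Your first fallback does go through, and it needs no per-token bound. Note that $B$ bounds $\|\nabla_{\bw}\cL_{\train}(\balpha,\bw)\|$ for feasible $\balpha$, not individual token gradients.

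To see this, write $\nabla_{\bw}\cL_{\train}(\balpha,\bw)=M(\bw)\balpha$ with $M(\bw)=\frac{1}{\gamma N}[\nabla\cL^{1}_{\train}(\bw),\dots,\nabla\cL^{N}_{\train}(\bw)]$.
\begin{itemize}
\item The difference $c=\balpha_1-\balpha_2$ satisfies $\sum_n c_n=0$.
\item Any such $c$ with $\|c\|\le r:=\min(\gamma,1-\gamma)$ has $\gamma\mathbf{1}\pm c\in\cA$.
\item Hence $\|M(\bw)c\|=\frac12\|M(\bw)(\gamma\mathbf{1}+c)-M(\bw)(\gamma\mathbf{1}-c)\|\le B$.
\item By homogeneity, $\|M(\bw)(\balpha_1-\balpha_2)\|\le\frac{B}{r}\|\balpha_1-\balpha_2\|$.
\end{itemize}

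So under the literal Assumptions~\ref{ass:pl condition}--\ref{ass:smoothness}, the lemma holds with $L$ replaced by $B/\min(\gamma,1-\gamma)$; the case $\gamma=1$ is trivial. To get exactly $L/\mu$ and $\lambda L/\mu_\lambda$, you must require $L$ to also be a Lipschitz constant of $\nabla_{\bw}\cL_{\train}$ in $\balpha$. State that as an explicit hypothesis rather than as an interpretation of Assumption~\ref{ass:smoothness}.
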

From Lemma \ref{lem:continuous w}, we see that $\bw_{\lambda}^{*}(\balpha)$ and $\bw^{*}(\balpha)$ are Lipchitz continuous with coefficient $L_{\bw^*} = \frac{L}{\mu}$ and $L_{\bw^*_{\lambda}} = \frac{\lambda L}{\mu_\lambda}$

\begin{lemma}[Gradient Smoothness of $\cH_{\lambda}(\balpha, \bw_{\lambda}^{*}(\balpha))$~\cite{wang2025tandem}]
\label{lem:gradient_smoothness}
	Under Assumptions \ref{ass:pl condition} and \ref{ass:smoothness}, $\nabla_{\balpha}\cH_{\lambda}(\balpha, \bw_{\lambda}^{*}(\balpha))$ has semi-Lipschitz gradient such that 
	\begin{equation}
		\|\nabla_{\balpha}\cH_{\lambda}(\balpha_{1}, \bw_{\lambda}^{*}(\balpha_{1})) - \nabla_{\balpha}\cH_{\lambda}(\balpha_{2}, \bw_{\lambda}^{*}(\balpha_{2}))\| \leq \underbrace{\lambda B\left(\frac{\lambda L}{\mu_{\lambda}} + \frac{L}{\mu}\right)}_{L_{\lambda}}\|\balpha_{1} - \balpha_{2}\|. 
	\end{equation} 
\end{lemma}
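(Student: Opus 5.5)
The statement is Lemma~\ref{lem:gradient_smoothness}. I would prove it by an envelope-type argument for the $\balpha$-gradient of $\cH_{\lambda}$.

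\textbf{Step 1: gradient formula.} The only $\balpha$-dependence of $\cH_{\lambda}(\balpha,\bw)$ is through $\lambda\bigl(\cL_{\train}(\balpha,\bw)-\cL_{\train}(\balpha,\bw^{*}(\balpha))\bigr)$. By Danskin's theorem, the derivative of the inner minimum $\min_{\bu}\cL_{\train}(\balpha,\bu)$ is $\nabla_{\balpha}\cL_{\train}(\balpha,\bw^{*}(\balpha))$. This holds for any $\bw^{*}(\balpha)\in S^{*}(\balpha)$, since the gradient of the value function does not depend on which minimizer is chosen. Evaluating at $\bw=\bw_{\lambda}^{*}(\balpha)$ then gives
\begin{equation*}
\nabla_{\balpha}\cH_{\lambda}(\balpha,\bw_{\lambda}^{*}(\balpha)) = \lambda\Bigl(\nabla_{\balpha}\cL_{\train}(\balpha,\bw_{\lambda}^{*}(\balpha)) - \nabla_{\balpha}\cL_{\train}(\balpha,\bw^{*}(\balpha))\Bigr).
\end{equation*}
Because $\cL_{\train}$ is linear in $\balpha$, its $\balpha$-gradient is $\frac{1}{\gamma N}\bigl(\cL^{n}_{\train}(\bw)\bigr)_{n=1}^{N}$, which is independent of $\balpha$.

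\textbf{Step 2: difference at two points.} For $\balpha_1,\balpha_2\in\cA$, the triangle inequality gives
\begin{equation*}
\bigl\|\nabla_{\balpha}\cH_{\lambda}(\balpha_{1},\bw_{\lambda}^{*}(\balpha_{1}))-\nabla_{\balpha}\cH_{\lambda}(\balpha_{2},\bw_{\lambda}^{*}(\balpha_{2}))\bigr\| \le \lambda \|g(\bw_{\lambda}^{*}(\balpha_1))-g(\bw_{\lambda}^{*}(\balpha_2))\| + \lambda\|g(\bw^{*}(\balpha_1))-g(\bw^{*}(\balpha_2))\|,
\end{equation*}
where $g(\bw)=\nabla_{\balpha}\cL_{\train}(\cdot,\bw)$.

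\textbf{Step 3: bound each term.} The map $\bw\mapsto g(\bw)$ is $B$-Lipschitz, using the Lipschitz constant $B$ of the losses in Assumption~\ref{ass:smoothness}, read per token and uniformly in $\balpha$. Lemma~\ref{lem:continuous w} then bounds $\|\bw_{\lambda}^{*}(\balpha_1)-\bw_{\lambda}^{*}(\balpha_2)\|$ by $\frac{\lambda L}{\mu_\lambda}\|\balpha_1-\balpha_2\|$ and $\|\bw^{*}(\balpha_1)-\bw^{*}(\balpha_2)\|$ by $\frac{L}{\mu}\|\balpha_1-\balpha_2\|$. Combining these gives the constant $\lambda B\bigl(\frac{\lambda L}{\mu_\lambda}+\frac{L}{\mu}\bigr)=L_{\lambda}$.

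\textbf{Main obstacle.} The minimizers are set-valued, because PL does not give uniqueness. This is why the statement says ``semi-Lipschitz'': it should be read for the specific selections used in Lemma~\ref{lem:continuous w}, namely $\bw^{*}(\balpha_2)$ taken as the projection of $\bw^{*}(\balpha_1)$ onto $S^{*}(\balpha_2)$, and similarly for $\bw_{\lambda}^{*}$. I would fix these selections explicitly.

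I would also verify that Danskin's formula is legitimate here. The reason is that $\nabla_{\balpha}\cL_{\train}(\balpha,\bw)$ may differ across points of $S^{*}(\balpha)$, in which case the value function is only directionally differentiable. Under PL, however, the value function is differentiable with gradient equal to the common value on the solution set. If that fails, I would instead interpret the lemma for the chosen selection, which is all the subsequent Frank--Wolfe analysis requires.
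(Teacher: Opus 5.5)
Your proposal is correct and follows the route the paper relies on. The paper defers the proof to TANDEM, but its main convergence proof uses exactly your envelope formula $\nabla_{\balpha}\cH_{\lambda}(\balpha,\bw_{\lambda}^{*}(\balpha))=\lambda\bigl(\nabla_{\balpha}\cL_{\train}(\balpha,\bw_{\lambda}^{*}(\balpha))-\nabla_{\balpha}\cL_{\train}(\balpha,\bw^{*}(\balpha))\bigr)$, and $L_{\lambda}$ is precisely the product $\lambda\cdot B\cdot(\lambda L/\mu_{\lambda}+L/\mu)$ that your triangle inequality and Lemma~\ref{lem:continuous w} produce. One point to state explicitly: you use $B$ as the Lipschitz constant of the vector map $\bw\mapsto\nabla_{\balpha}\cL_{\train}(\cdot,\bw)$, which does not follow from the scalar form of Assumption~\ref{ass:smoothness} alone; per-token $B$-Lipschitz losses give $B/(\gamma\sqrt{N})$, hence at most $B$ whenever $\gamma^{2}N\ge 1$, and the paper makes this reading implicitly as well.
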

From this lemma, we can obtain the gradient smoothness of the penalized problem $\nabla_{\balpha}\cH_{\lambda}(\balpha, \bw_{\lambda}^{*}(\balpha))$ w.r.t. $\balpha$ in $L_{\lambda} = \lambda B \left(\frac{\lambda L}{\mu_{\lambda}} + \frac{L}{\mu} \right)$.

Following the Frank-Wolfe update in BLADE, we utilize the Frank-Wolfe gap to characterize the stationary condition. 
\begin{definition}[Frank-Wolfe Gap]
    The first-order stationarity measure for constrained non-convex optimization is the Frank-Wolfe Gap:
    \begin{equation}
        G(\balpha^{(t)}) = \max_{\bs \in \mathcal{A}} \langle \nabla_{\balpha}  \mathcal{H}(\balpha), \balpha^{(t)} - \bs \rangle
    \end{equation}
    $G(\balpha^{(t)}) \ge 0$ always holds, and $G(\balpha^{(t)}) = 0$ if and only if $\balpha^{(t)}$ is a first-order stationary point.
\end{definition}

Similarly, we denote the Frank-Wolfe Gap for the penalized optimization problem as $G_{\lambda}(\balpha^{(t)}) = \max_{\bs \in \mathcal{A}} \langle \nabla_{\balpha}  \mathcal{H}_{\lambda}(\balpha, \bw_{\lambda}^*(\balpha)), \balpha^{(t)} - \bs \rangle$.


\setcounter{theorem}{0}
\begin{theorem}
	Under Assumptions \ref{ass:pl condition}-\ref{ass:bounded optima}, for sufficiently large $T$, the $\balpha^{(t)}$ obtained in Algorithm \ref{alg:workflow} holds 
	\begin{equation}
		\min_{1 \le t \le T} G(\balpha^{(t)}) \le \frac{1}{T} \sum_{t=1}^T G(\balpha^{(t)}) \le \mathcal{O}\left(T^{-\frac{1}{4}}\right)
	\end{equation}
	by selecting $\lambda = T^{\frac{1}{4}}$, $\eta_{\balpha} = T^{-\frac{3}{4}}$, $\eta_{\bu} = \frac{1}{L}$, $\eta_{\bw} = \frac{1}{L_{\lambda}}$  $K \geq  \frac{3 L_\lambda}{2 \mu_\lambda} \log (T)$. $G(\balpha^{(t)})$ is the Frank-Wolfe gap of the bi-level optimization problem $G(\balpha^{(t)}) = \max_{\bs \in \mathcal{A}} \langle \balpha^{(t)} - \bs, \nabla_{\balpha} \mathcal{H}(\balpha^{(t)}) \rangle$.
\end{theorem}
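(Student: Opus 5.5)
We follow the standard non-convex Frank--Wolfe analysis, applied to the surrogate $\Phi_\lambda(\balpha) := \cH_\lambda(\balpha, \bw_\lambda^*(\balpha))$ rather than to $\cH$ directly. At the end we transfer the result to $G$ using Lemma~\ref{lem:gap between gradient}.

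\textbf{Step 1 (descent on the surrogate).} By Lemma~\ref{lem:gradient_smoothness}, $\Phi_\lambda$ is $L_\lambda$-smooth in $\balpha$. Apply the descent lemma to $\balpha^{(t+1)} = \balpha^{(t)} + \eta_{\balpha}(\bs^{(t)} - \balpha^{(t)})$, and let $D_{\cA} \le \sqrt{N}$ be the diameter of $\cA$. This gives
\begin{equation*}
\Phi_\lambda(\balpha^{(t+1)}) \le \Phi_\lambda(\balpha^{(t)}) + \eta_{\balpha}\langle \nabla\Phi_\lambda(\balpha^{(t)}), \bs^{(t)} - \balpha^{(t)}\rangle + \tfrac{L_\lambda}{2}\eta_{\balpha}^2 D_{\cA}^2 .
\end{equation*}
The algorithm computes $\bs^{(t)}$ from the inexact direction $\hat g^{(t)} = \lambda(\cL^{1:N}_{\train}(\tilde\bw^{(e)}) - \cL^{1:N}_{\train}(\bu^{(t)}))$, not from $\nabla\Phi_\lambda$. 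Because $\bs^{(t)}$ minimizes $\langle \hat g^{(t)}, \cdot\rangle$ over $\cA$, we get
\[
\langle \nabla\Phi_\lambda, \bs^{(t)} - \balpha^{(t)}\rangle \le -G_\lambda(\balpha^{(t)}) + 2D_{\cA}\|\nabla\Phi_\lambda(\balpha^{(t)}) - \hat g^{(t)}\|.
\]
Next, telescope over $t$ and use Assumption~\ref{ass:bounded optima}, which gives $|\Phi_\lambda| \le (1+2\lambda)D$. This yields
\begin{equation*}
\frac1T\sum_t G_\lambda(\balpha^{(t)}) \lesssim \frac{\lambda D}{T\eta_{\balpha}} + L_\lambda \eta_{\balpha} D_{\cA}^2 + \frac{2D_{\cA}}{T}\sum_t \epsilon_t, \qquad \epsilon_t := \|\nabla\Phi_\lambda(\balpha^{(t)}) - \hat g^{(t)}\|.
\end{equation*}
By Danskin's theorem, $\nabla\Phi_\lambda(\balpha) = \lambda(\cL^{1:N}_{\train}(\bw_\lambda^*(\balpha)) - \cL^{1:N}_{\train}(\bu^*(\balpha)))$ with $\bu^* \in S^*(\balpha)$. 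Using the $B$-Lipschitzness of the per-token losses, we therefore bound
\[
\epsilon_t \le \lambda B\bigl(\|\tilde\bw^{(e)} - \bw_\lambda^*(\balpha^{(t)})\| + \|\bu^{(t)} - \bu^*(\balpha^{(t)})\|\bigr).
\]

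\textbf{Step 2 (tracking errors).} For $\tilde\bw^{(e)}$, run $K$ steps of gradient descent with step size $1/L_\lambda$ on the $\mu_\lambda$-PL function $\cH_\lambda(\balpha^{(e\tau)},\cdot)$. This contracts the suboptimality by $(1-\mu_\lambda/L_\lambda)^K \le T^{-3/2}$ once $K \ge \frac{3L_\lambda}{2\mu_\lambda}\log T$. PL together with quadratic growth then converts this into a distance bound of order $T^{-3/4}\sqrt{\lambda D/\mu_\lambda}$, which is negligible. The drift within an episode is handled by Lemma~\ref{lem:continuous w}: it is at most $\frac{\lambda L}{\mu_\lambda}\tau\eta_{\balpha}D_{\cA} = \cO(\tau T^{-3/4})$, since $\lambda/\mu_\lambda \to 1$. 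For $\bu^{(t)}$, one step of gradient descent with step $1/L$ on the $\mu$-PL function $\cL_{\train}(\balpha^{(t)},\cdot)$ contracts the suboptimality by $(1-\mu/L)$, while $S^*$ moves by at most $\frac{L}{\mu}\eta_{\balpha}D_{\cA}$ per step (Lemma~\ref{lem:continuous w}). A standard perturbed-contraction recursion then gives
\[
\|\bu^{(t)} - \bu^*(\balpha^{(t)})\| \le (1-\mu/L)^{t/2}C_0 + \cO\!\left(\frac{L^2}{\mu^2}\eta_{\balpha}\right).
\]
Averaged over $t$, this is $\cO(1/T + \eta_{\balpha})$. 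Consequently $\frac1T\sum_t\epsilon_t = \cO(\lambda\eta_{\balpha}) = \cO(T^{-1/2})$.

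\textbf{Step 3 (transfer and balancing).} Lemma~\ref{lem:gap between gradient} gives
\[
G(\balpha^{(t)}) \le G_\lambda(\balpha^{(t)}) + D_{\cA}\,\|\nabla\cH - \nabla\Phi_\lambda\| \le G_\lambda(\balpha^{(t)}) + \cO(1/\lambda).
\]
Combining the bounds, and noting $L_\lambda = \cO(\lambda)$ because $\lambda/\mu_\lambda \to 1$, we obtain
\[
\frac1T\sum_t G \lesssim \frac{\lambda}{T\eta_{\balpha}} + \lambda\eta_{\balpha} + \lambda\eta_{\balpha} + \frac1\lambda.
\]
Plugging in $\lambda = T^{1/4}$ and $\eta_{\balpha} = T^{-3/4}$ makes every term $\cO(T^{-1/4})$. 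The bound on the minimum follows trivially, since a minimum never exceeds an average.

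\textbf{Main obstacle.} We expect the hardest part to be Step 2, specifically controlling $\|\bu^{(t)} - \bu^*\|$ and $\|\tilde\bw - \bw^*_\lambda\|$ in distance when only PL holds rather than strong convexity. We would first try the PL-implies-quadratic-growth argument, measuring distance to the solution set, combined with the nearest-point selection of Lemma~\ref{lem:continuous w}. If that fails, we would instead carry the function-value suboptimality directly in the recursion and use Assumption~\ref{ass:bounded hessian} near $S^*$ to recover distances. A secondary concern is that the factor $\lambda$ multiplying $\epsilon_t$ consumes the budget; the schedule $\eta_{\balpha} = T^{-3/4}$ is exactly what keeps $\lambda\eta_{\balpha}$ at order $T^{-1/2}$.
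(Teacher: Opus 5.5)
Your architecture matches the paper's: the descent lemma on $\Phi_\lambda$, the inexact LMO, the envelope gradient, separate tracking of $\tilde\bw^{(e)}$ and $\bu^{(t)}$, and the transfer via Lemma~\ref{lem:gap between gradient}. Bounding the proxy error by unrolling its own recursion, instead of using the paper's Lyapunov potential $\Phi_\lambda+c\,\mathcal{R}$, is a legitimate simplification. It even gives $\cO(\lambda\eta_{\balpha})$ for the tracking term rather than the paper's $\cO(\lambda^2\eta_{\balpha})$.

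The genuine gap is the final balancing. With $\lambda=T^{1/4}$ and $\eta_{\balpha}=T^{-3/4}$ you have $T\eta_{\balpha}=T^{1/4}=\lambda$. So your first term $\lambda D/(T\eta_{\balpha})$ equals $D$: it is $\Theta(1)$, not $\cO(T^{-1/4})$, and as written your bound does not even tend to zero. The culprit is the crude range bound $|\Phi_\lambda|\le(1+2\lambda)D$. The missing observation is that the penalty vanishes on the lower-level solution set, so for any $\bw^*(\balpha)\in S^*(\balpha)$
\[
0\le\Phi_\lambda(\balpha)=\min_{\bw}\cH_\lambda(\balpha,\bw)\le\cH_\lambda(\balpha,\bw^*(\balpha))=\cL_{\val}(\bw^*(\balpha))\le D .
\]
Telescoping then costs only $D/(T\eta_{\balpha})=D\,T^{-1/4}$, independent of $\lambda$. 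This is what the paper implicitly uses when it treats $V^{(0)}$ as $\cO(1)$ and $V^{(T)}\ge 0$. Without it, your bound would force $\eta_{\balpha}\asymp T^{-1/2}$, which is not the schedule in the statement.

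A secondary soft spot is the proxy bound in Step 2. Under PL alone you cannot combine a function-value contraction by $(1-\mu/L)$ with a distance drift of $S^*$ of size $\frac{L}{\mu}\eta_{\balpha}D_{\cA}$. Each conversion between $\mathcal{R}$ and squared distance to $S^*$ costs a factor $L/\mu$, which can cancel the contraction.

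Your fallback is the right route, and it is the paper's inner-tracking inequality. Keep the recursion in $\mathcal{R}(\balpha^{(t)},\bu^{(t)})$ and use that $\cL_{\train}$ is linear in $\balpha$. This gives $\mathcal{R}(\balpha',\bu)-\mathcal{R}(\balpha,\bu)\le\langle\nabla_{\balpha}\cL_{\train}(\balpha,\bu)-\nabla_{\balpha}\cL_{\train}(\balpha,\bw^*(\balpha')),\balpha'-\balpha\rangle=\cO(\sqrt{\mathcal{R}}\,\eta_{\balpha}+\eta_{\balpha}^2)$. Then AM--GM yields $\mathcal{R}_{t+1}\le(1-\frac{\mu}{2L})\mathcal{R}_t+\cO(\eta_{\balpha}^2)$.

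This structure matters. A naive $\cO(\eta_{\balpha})$ drift bound would only give $\|\bu^{(t)}-\bw^*\|=\cO(\sqrt{\eta_{\balpha}})$, and then $\lambda\sqrt{\eta_{\balpha}}=T^{-1/8}$ would break the rate.
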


\begin{proof} 
    At step $t$, the algorithm~\ref{alg:workflow} estimates the hyper-gradient $\Delta^{(t)}$ using the delayed reference model $\tilde{\bw}^{(e)}$ and the online proxy model $\bu^{(t)}$:
    \begin{equation*}
        \Delta^{(t)} = \lambda \left( \nabla_{\balpha} \mathcal{L}_{\text{train}}(\balpha^{(t)}, \tilde{\bw}^{(e)}) - \nabla_{\balpha} \mathcal{L}_{\text{train}}(\balpha^{(t)}, \bu^{(t)}) \right)
    \end{equation*}
    Subsequently, the linear minimization oracle (LMO) solves for the optimal descent direction $\bs^{(t)} \in \mathcal{A}$ and updates $\balpha$:
    \begin{equation*}
        \bs^{(t)} = \arg\min_{\bs \in \mathcal{A}} \langle \Delta_t, \bs \rangle, \quad \balpha^{(t+1)} = \balpha^{(t)} + \eta_{\balpha} (\bs^{(t)} - \balpha^{(t)})
    \end{equation*}

    By Lemma \ref{lem:gradient_smoothness}, $ \mathcal{H}_{\lambda}(\balpha, \bw_{\lambda}^*(\balpha))$ has $L_\lambda$-Lipschitz continuous gradients. Applying the standard descent inequality yields:
    \begin{equation} 
    \begin{split}
        \label{eq:descent}
        \mathcal{H}_{\lambda}(\balpha^{(t+1)}, \bw_{\lambda}^*(\balpha^{(t+1)})) \le & \mathcal{H}_{\lambda}(\balpha^{(t)}, \bw_{\lambda}^*(\balpha^{(t)})) + \eta_{\balpha} \langle \nabla_{\balpha}  \mathcal{H}_{\lambda}(\balpha^{(t)}, \bw_{\lambda}^*(\balpha^{(t)})), \bs^{(t)} - \balpha^{(t)} \rangle \\ & + \frac{L_\lambda}{2} \eta_{\balpha}^2 \|\bs^{(t)} - \balpha^{(t)}\|^2
    \end{split}
    \end{equation}
    Let $\bar{\bs}^{(t)} = \arg\min_{\bs \in \mathcal{A}} \langle \nabla_{\balpha}  \mathcal{H}_{\lambda}(\balpha^{(t)}, \bw_{\lambda}^*(\balpha^{(t)})), \bs \rangle$ be the ideal LMO direction. By definition of $\bs^{(t)}$ minimizing the approximate gradient $\Delta^{(t)}$, we introduce the gradient tracking error $\delta^{(t)} = \|\Delta^{(t)} - \nabla_{\balpha}  \mathcal{H}_{\lambda}(\balpha^{(t)}, \bw_{\lambda}^*(\balpha^{(t)}))\|$. Using the Cauchy-Schwarz inequality and domain diameter bound $\|\bs^{(t)} - \bar{\bs}^{(t)}\| \le D_{\mathcal{A}} = \sqrt{N}$:
    \begin{align*}
        \langle \nabla_{\balpha} \mathcal{H}_{\lambda}(\balpha^{(t)}, \bw_{\lambda}^*(\balpha^{(t)})), \bs^{(t)} \rangle &= \langle \Delta^{(t)}, \bs^{(t)} \rangle + \langle \nabla_{\balpha} \mathcal{H}_{\lambda}(\balpha^{(t)}, \bw_{\lambda}^*(\balpha^{(t)})) - \Delta^{(t)}, \bs^{(t)} \rangle \\
        &\le \langle \Delta^{(t)}, \bar{\bs}^{(t)} \rangle + \langle \nabla_{\balpha} \mathcal{H}_{\lambda}(\balpha^{(t)}, \bw_{\lambda}^*(\balpha^{(t)})) - \Delta^{(t)}, \bs^{(t)} \rangle \\
        &= \langle \nabla_{\balpha} \mathcal{H}_{\lambda}(\balpha^{(t)}, \bw_{\lambda}^*(\balpha^{(t)})), \bar{\bs}^{(t)} \rangle \\
        & + \langle \nabla_{\balpha} \mathcal{H}_{\lambda}(\balpha^{(t)}, \bw_{\lambda}^*(\balpha^{(t)})) - \Delta^{(t)}, \bs^{(t)} - \bar{\bs}^{(t)} \rangle \\
        &\le \langle \nabla_{\balpha} \mathcal{H}_{\lambda}(\balpha^{(t)}, \bw_{\lambda}^*(\balpha^{(t)})), \bar{\bs}^{(t)} \rangle + \delta^{(t)} D_{\mathcal{A}}
    \end{align*}
    Substituting this back into \Cref{eq:descent} and utilizing $\langle \nabla_{\balpha} \mathcal{H}_{\lambda}(\balpha^{(t)}, \bw_{\lambda}^*(\balpha^{(t)})), \bar{\bs}^{(t)} - \balpha^{(t)} \rangle = -G_{\lambda}(\balpha^{(t)})$:
    \begin{equation}
    \label{eq:inexact_fw_descent}
         \mathcal{H}_{\lambda}(\balpha^{(t+1)}, \bw_{\lambda}^*(\balpha^{(t+1)})) \le  \mathcal{H}_{\lambda}(\balpha^{(t)}, \bw_{\lambda}^*(\balpha^{(t)})) - \eta_{\balpha} G_{\lambda}(\balpha^{(t)}) + \eta_{\balpha} \delta^{(t)} D_{\mathcal{A}} + \frac{L_\lambda}{2} \eta_{\balpha}^2 D_{\mathcal{A}}^2
    \end{equation}

    By the envelope theorem~\footnote{While the PL condition (Assumption 1) guarantees that every stationary point is a global minimum, it does not imply a unique minimizer. To strictly justify the application of the envelope theorem in deriving $\nabla_{\alpha}\mathcal{H}_{\lambda}(\alpha, w_{\lambda}^{*}(\alpha))$, we rely on the well-established implicit bias of gradient descent. Since the inner proxy model $\bu^{(t)}$ is updated via SGD from a fixed initialization, the optimization trajectory deterministically tracks a specific minimum-norm solution along the solution manifold. This implicit regularization ensures that, locally, the mapped target $\bw^*(\balpha)$ is unique and continuous with respect to $\balpha$, making the envelope theorem formally applicable in our dynamic tracking context~\cite{neyshabur14implicitreg,soudry18implicitbias}.}, $ \nabla_{\alpha}  \mathcal{H}_{\lambda}(\balpha, \bw_{\lambda}^*(\balpha)) = \lambda (\nabla_{\balpha} \mathcal{L}_{\text{train}}(\balpha, \bw_\lambda^*(\alpha)) - \nabla_{\balpha} \mathcal{L}_{\text{train}}(\balpha, \bw^*(\balpha)))$. Together with the Lipschitz smoothness of $\nabla_{\balpha} \mathcal{L}_{\text{train}}$ w.r.t. parameters, we have:
    \begin{equation}
    \label{eq:tracking_error}
        \delta^{(t)} \le \lambda L \left( \|\tilde{\bw}^{(e)} - \bw_\lambda^*(\balpha^{(t)})\| + \|\bu^{(t)} - \bw^*(\balpha^{(t)})\| \right)
    \end{equation}
    We bound the two parameter distances:
    \paragraph{Reference Model Drift:} The delayed reference model $\tilde{\bw}^{(e)}$ is optimized for $K$ steps at $t=e\tau$ and keep fixed for $\tau$ steps. The error is the sum of the optimization residual and target drift over $\tau$ steps:
    \begin{align*}
        \|\tilde{\bw}^{(e)} - \bw_\lambda^*(\balpha^{(t)})\| &\le \|\tilde{\bw}^{(e)} - \bw_\lambda^*(\balpha^{(e\tau)})\| + \|\bw_\lambda^*(\balpha^{(e\tau)}) - \bw_\lambda^*(\balpha^{(t)})\| \\
        & \le \sqrt{\frac{2D(1+\lambda)}{\mu_\lambda}} (1 - \frac{\mu_\lambda}{L_{\lambda}})^{K/2} + \frac{\lambda L}{\mu_\lambda} \tau \eta_\alpha D_\mathcal{A} \\
        & = \mathcal{O}(\tau \eta_{\balpha})
    \end{align*}

    \paragraph{Proxy Model Tracking:} The proxy model updates online, tracking the moving target $\bw^*(\balpha^{(t)})$. Denoting the proxy optimization residual $\mathcal{R}(\balpha^{(t)}, \bu^{(t)}) = \mathcal{L}_{\text{train}}(\balpha^{(t)}, \bu^{(t)}) - \mathcal{L}_{\text{train}}^*(\balpha^{(t)})$, by the Quadratic Growth of $\mathcal{L}_{\text{train}}$, we have:

    \begin{equation}
         \|\bu^{(t)} - \bw^*(\balpha^{(t)})\| \le \sqrt{\frac{2}{\mu} \mathcal{R}(\balpha^{(t)}, \bu^{(t)})}
    \end{equation}
    Bringing these two into Equation~\eqref{eq:tracking_error}, we have
    \begin{equation*}
        \delta^{(t)} \le \lambda L \sqrt{\frac{2}{\mu}} \sqrt{\mathcal{R}(\balpha^{(t)}, \bu^{(t)})} + \mathcal{O}(\lambda \tau \eta_{\balpha})
    \end{equation*}
    Substituting this $\delta^{(t)}$ into \eqref{eq:inexact_fw_descent} introduces a cross term $\left( \eta_{\balpha} D_{\mathcal{A}} \lambda L \sqrt{\frac{2}{\mu}} \right) \sqrt{\mathcal{R}(\balpha^{(t)}, \bu^{(t)})}$. To decouple this term, we apply the AM-GM inequality ($x y \le \frac{\beta}{2} x^2 + \frac{1}{2\beta} y^2$) with coefficient $\beta = c \eta_{\bu} \mu$ (where $c>0$ is a constant):
    \begin{align*}
        \left( \eta_{\balpha} D_{\mathcal{A}} \lambda L \sqrt{\frac{2}{\mu}} \right) \sqrt{\mathcal{R}(\balpha^{(t)}, \bu^{(t)})} \le \frac{c \eta_{\bu} \mu}{2} \mathcal{R}(\balpha^{(t)}, \bu^{(t)}) + \frac{\left( \eta_\alpha D_{\mathcal{A}} \lambda L \sqrt{2/\mu} \right)^2}{2 c \eta_{\bu} \mu}
    \end{align*}
    Note that the residual error term is bounded by $\mathcal{O}(\lambda^2 \eta_{\balpha}^2)$. Thus, the outer descent inequality~\eqref{eq:inexact_fw_descent} becomes:
    \begin{equation}
         \mathcal{H}_{\lambda}(\balpha^{(t+1)}, \bw_{\lambda}^*(\balpha^{(t+1)})) \le  \mathcal{H}_{\lambda}(\balpha^{(t)}, \bw_{\lambda}^*(\balpha^{(t)})) - \eta_{\balpha} G_{\lambda}(\balpha^{(t)}) + \frac{c \eta_{\bu} \mu}{2} \mathcal{R}(\balpha^{(t)}, \bu^{(t)}) + \mathcal{O}(\lambda^2 \eta_{\balpha}^2) 
    \end{equation}
    
    For the inner proxy model $\bu$, one step of SGD yields a strict geometric contraction of the residual under the PL condition. Subsequently, the outer data weight $\alpha$ updates, causing the optimal target to drift. By utilizing the smoothness of $\mathcal{L}_{\text{train}}$ and applying AM-GM to absorb the linear drift into the residual, the inner tracking dynamics can be strictly bounded as:
    \begin{align}
        \mathcal{R}(\balpha^{(t+1)}, \bu^{(t+1)}) \le (1 - \eta_{\bu} \mu) \mathcal{R}(\balpha^{(t)}, \bu^{(t)}) + \mathcal{O}(\eta_\alpha^2) \label{eq:inner_tracking}
    \end{align}

    We then define the global Lyapunov potential function as the weighted sum of the outer penalized objective and the inner optimization residual:
    \begin{align*}
        V^{(t)} := \mathcal{H}_{\lambda}(\balpha^{(t)}, \bw_{\lambda}^*(\balpha^{(t)})) + c \cdot \mathcal{R}(\balpha^{(t)}, \bu^{(t)})
    \end{align*}
    To compute the difference $V^{(t+1)} - V^{(t)}$, we add Equation~\eqref{eq:inexact_fw_descent} and $c \times$ Equation~\eqref{eq:inner_tracking}:
    \begin{align}
        V^{(t+1)} - V^{(t)} &\le - \eta_{\balpha} G_{\lambda}(\balpha^{(t)}) + \left( \frac{c \eta_{\bu} \mu}{2} - c \eta_{\bu} \mu \right) \mathcal{R}(\balpha^{(t)}, \bu^{(t)}) + \mathcal{O}(\lambda^2 \eta_{\balpha}^2) \nonumber \\
    &= - \eta_{\balpha} G_{\lambda}(\balpha^{(t)}) - \frac{c \eta_{\bu} \mu}{2} \mathcal{R}(\balpha^{(t)}, \bu^{(t)}) + \mathcal{O}(\lambda^2 \eta_{\balpha}^2) \label{eq:lyapunov_diff}
    \end{align}
    Crucially, the coefficient for $\mathcal{R}(\balpha^{(t)}, \bu^{(t)})$ is $- \frac{c \eta_{\bu} \mu}{2} \le 0$. This indicates that under the two-timescale framework, the descent energy provided by the inner SGD perfectly cancels out the gradient errors injected into the outer optimization via AM-GM. Dropping the non-positive term, we obtain a clean global descent bound:
    \begin{align*}
        V^{(t+1)} \le V^{(t)} - \eta_{\balpha} G_{\lambda}(\balpha^{(t)}) + \mathcal{O}(\lambda^2 \eta_{\balpha}^2)
    \end{align*}
    Summing the above inequality from $t=1$ to $T$ and utilizing $V^{(T)} \ge \min_{\balpha \in \mathcal{A}} \mathcal{H}_{\lambda}(\balpha, \bw_{\lambda}^*(\balpha) \ge 0$, we have:
    \begin{align}
    \label{eq:Lyapunov_bound}
        \frac{1}{T} \sum_{t=1}^T G_{\lambda}(\balpha^{(t)}) & \le \frac{V^{(0)}}{T \eta_{\balpha}} + \mathcal{O}(\lambda^2 \eta_{\balpha}) 
    \end{align}

    Now let's bound the Frank-Wolfe Gap of the original bi-level optimization problem. By utilizing the subadditivity of the max operator~($|\max f(x) - \max g(x)| \le \max |f(x) - g(x)|$)
    and the Cauchy-Schwarz inequality, we have  
    \begin{align*}
        |G(\boldsymbol{\alpha}^{(t)}) - G_\lambda(\boldsymbol{\alpha}^{(t)})| & \le \max_{\boldsymbol{s} \in \mathcal{A}} \left| \langle \boldsymbol{\alpha}^{(t)} - \boldsymbol{s}, \nabla_{\balpha} \mathcal{H}(\balpha^{(t)}) - \nabla_{\balpha} \mathcal{H}_{\lambda}(\balpha^{(t)}, \bw_{\lambda}^*(\balpha^{(t)}))  \rangle \right| \\
        & \le \max_{\boldsymbol{s} \in \mathcal{A}} \left( \|\boldsymbol{\alpha}^{(t)} - \boldsymbol{s}\| \cdot \|\nabla_{\balpha} \mathcal{H}(\balpha^{(t)}) - \nabla_{\balpha} \mathcal{H}_\lambda(\boldsymbol{\alpha}^{(t)},  \bw_{\lambda}^*(\balpha^{(t)})\| \right) \\
        & \le D_{\mathcal{A}} \cdot \|\nabla_{\balpha} \mathcal{H}(\balpha^{(t)}) - \nabla_{\balpha} \mathcal{H}_\lambda(\boldsymbol{\alpha}^{(t)},  \bw_{\lambda}^*(\balpha^{(t)})\|
    \end{align*}
    This gives: 
    \begin{equation}
    \begin{split}
    \label{eq:true_penalized_contraction}
        G(\boldsymbol{\alpha}^{(t)}) & \le G_{\lambda}(\boldsymbol{\alpha}^{(t)}) +   D_{\mathcal{A}} \cdot \|\nabla_{\balpha} \mathcal{H}(\balpha^{(t)}) - \nabla_{\balpha} \mathcal{H}_\lambda(\boldsymbol{\alpha}^{(t)},  \bw_{\lambda}^*(\balpha^{(t)})\| \\
    \end{split}
    \end{equation}
    Based on the approximation guarantee of the penalty method (Lemma~\ref{lem:gap between gradient}), the gradient divergence is bounded by $\mathcal{O}(1/\lambda)$. Taking the average over $T$ iterations, we obtain:
    \begin{equation}
        \frac{1}{T} \sum_{t=1}^T G(\boldsymbol{\alpha}^{(t)}) \le \frac{1}{T} \sum_{t=1}^T G_\lambda(\boldsymbol{\alpha}^{(t)}) + \mathcal{O}\left(\frac{1}{\lambda}\right)
    \end{equation}
    Substituting the result from Equation~\eqref{eq:Lyapunov_bound}, we complete theoretical bound:
    \begin{equation*}
        \frac{1}{T} \sum_{t=1}^T G(\boldsymbol{\alpha}^{(t)}) \le \mathcal{O}\left( \frac{1}{T \eta_\alpha} \right) + \mathcal{O}(\lambda^2 \eta_\alpha) + \mathcal{O}\left( \frac{1}{\lambda} \right)
    \end{equation*}
    
    By setting the parameters $\lambda = T^{\frac{1}{4}}$ and $\eta_{\balpha} = T^{-\frac{3}{4}}$, all three terms are perfectly balanced, yielding the global convergence rate for the original bi-level problem:
    \begin{equation*}
        \min_{1 \le t \le T} G(\boldsymbol{\alpha}^{(t)}) \le \frac{1}{T} \sum_{t=1}^T G(\boldsymbol{\alpha}^{(t)}) \le \mathcal{O}(T^{-\frac{1}{4}})
    \end{equation*}
\end{proof}

\section{Derivation of the Hyper-gradient}
\label{sec:hyper-gradient}
To maintain self-containment and rigorously justify the connection discussed in Section~\ref{subsec:unifies}, we provide the derivation of the hyper-gradient for our specific data selection formulation using the standard Implicit Function Theorem.

The bilevel optimization problem for data weighting is defined as:
\begin{align*}
    \min_{\balpha \in \mathcal{A}} \cL_{\val}(\bw(\balpha))&: = \mathcal{L}_{\val}\left(\bw(\balpha)\right) \\
    \text { s.t. } \bw(\balpha)\in\arg \min_{\boldsymbol{w}} \cL_{\train}(\balpha, \bw) &:=  \frac{1}{\gamma N}\sum_{n=1}^N \boldsymbol{\alpha}_n\mathcal{L}_{\text{train}}^{n}(\boldsymbol{w}).
\end{align*}

To optimize this problem, we compute the hyper-gradient using the chain rule:
\begin{equation}
    \nabla_{\boldsymbol{\alpha}} \mathcal{L}_{\text{val}}(\boldsymbol{w}(\boldsymbol{\alpha})) = \left( \frac{\partial \boldsymbol{w}(\boldsymbol{\alpha})}{\partial \boldsymbol{\alpha}} \right)^T \nabla_{\boldsymbol{w}} \mathcal{L}_{\text{val}}(\boldsymbol{w}) 
    \label{eq:chain_rule}
\end{equation}

Assuming the inner optimization reaches a local minimum, the first-order optimality condition holds:
\begin{equation*}
    \nabla_{\boldsymbol{w}} \mathcal{L}_{\text{train}}(\boldsymbol{\alpha}, \boldsymbol{w}(\boldsymbol{\alpha})) = \mathbf{0}
\end{equation*}
Taking the total derivative with respect to $\boldsymbol{\alpha}$, we get
\begin{equation*}
    \nabla_{\boldsymbol{\alpha}\boldsymbol{w}}^2 \mathcal{L}_{\text{train}}(\boldsymbol{\alpha}, \boldsymbol{w}) + \nabla_{\boldsymbol{w}\boldsymbol{w}}^2 \mathcal{L}_{\text{train}}(\boldsymbol{\alpha}, \boldsymbol{w}) \frac{\partial \boldsymbol{w}(\boldsymbol{\alpha})}{\partial \boldsymbol{\alpha}} = \mathbf{0}
\end{equation*}
Letting $\boldsymbol{H} = \nabla_{\boldsymbol{w}\boldsymbol{w}}^2 \mathcal{L}_{\text{train}}(\boldsymbol{\alpha}, \boldsymbol{w})$ denote the Hessian matrix, we get the Jacobian:
\begin{equation}
    \frac{\partial \boldsymbol{w}(\boldsymbol{\alpha})}{\partial \boldsymbol{\alpha}} = - \boldsymbol{H}^{-1} \nabla_{\boldsymbol{\alpha}\boldsymbol{w}}^2 \mathcal{L}_{\text{train}}(\boldsymbol{\alpha}, \boldsymbol{w})
    \label{eq:jacobian}
\end{equation}

Based on the inner objective, the gradient w.r.t. the model weights is:
\begin{equation*}
    \nabla_{\boldsymbol{w}} \mathcal{L}_{\text{train}}(\boldsymbol{\alpha}, \boldsymbol{w}) = \frac{1}{\gamma N} \sum_{n=1}^N \boldsymbol{\alpha}_n \nabla_{\boldsymbol{w}} \mathcal{L}_{\text{train}}^n(\boldsymbol{w})
\end{equation*}
Taking the derivative with respect to the hyper-parameters $\boldsymbol{\alpha}$, we get:
\begin{equation*}
    \nabla_{\boldsymbol{\alpha}\boldsymbol{w}}^2 \mathcal{L}_{\text{train}}(\boldsymbol{\alpha}, \boldsymbol{w}^*) = \frac{1}{\gamma N} \boldsymbol{J}_{\text{train}}
\end{equation*}
where $\boldsymbol{J}_{\text{train}} = \left[ \nabla_{\boldsymbol{w}} \mathcal{L}_{\text{train}}^1(\boldsymbol{w}), \dots, \nabla_{\boldsymbol{w}} \mathcal{L}_{\text{train}}^N(\boldsymbol{w}) \right]$ is the matrix of individual training sample gradients.
Substituting Equation \eqref{eq:jacobian} into Equation \eqref{eq:chain_rule}, we obtain the final hyper-gradient:
\begin{equation*}
    \nabla_{\boldsymbol{\alpha}} \mathcal{L}_{\text{val}}(\boldsymbol{w}(\boldsymbol{\alpha})) = - \frac{1}{\gamma N} \boldsymbol{J}_{\text{train}}^T \boldsymbol{H}^{-1} \nabla_{\boldsymbol{w}} \mathcal{L}_{\text{val}}(\boldsymbol{w})
\end{equation*}
Thus for a certain entry, we have $ \nabla_{\boldsymbol{\alpha}_n} \mathcal{L}_{\text{val}}(\boldsymbol{w}(\boldsymbol{\alpha})) \propto - \nabla_{\bw} \mathcal{L}_{\text{train}}^n(\boldsymbol{w})^T \boldsymbol{H}^{-1} \nabla_{\boldsymbol{w}} \mathcal{L}_{\text{val}}(\boldsymbol{w})$.

\section{Hyper-parameters}
\label{sec:hyper_params}

\begin{table}[]
\caption{Hyper-parameters of BLADE for different application scenarios}
\label{tab:hyper_params}
\centering
\begin{tabular}{@{}lccc@{}}
\toprule
                                 & \multicolumn{2}{c}{Domain Shift Pretraining} & General Pretraining \\ \cmidrule(l){2-4} 
                                 & TinyLlama-1.1B          & Llama2-7B          & TinyLlama-1.1B      \\ \midrule
Batch Size                       & 512                     & 512                & 512                 \\
Learning Rate $\eta_{\bu}$       & 8e-5                    & 2e-5               & 8e-5                 \\
Learning Rate $\eta_{\bw}$       & 8e-5                    & 2e-5               & 8e-5                 \\
Learning Rate Scheduler          & Constant                & Constant           & Constant        \\
Penalty $\lambda$                & 1                    & 1               & 1                   \\
Reference Update Steps $K$       & 300                     & 300                & 300                    \\
Reference Update Interval $\tau$ & 1000                    & 1000               & 1000                    \\
Total Steps (w.r.t $\bu$)        & 5000                    & 5000               & 5000                \\
Context Length                   & 2048                    & 2048               & 2048                \\
Weight Decay                     & 0.01                    & 0.01               & 0.01                    \\
Gradient Clipping                & 1.0                     & 1.0                & 1.0                     \\ \bottomrule
\end{tabular}
\end{table}

\paragraph{Hyper-parameter settings} The detailed hyper-parameters for the BLADE algorithms in the domain-shift continuous pre-training and general continious pre-training scenario are shown in Table~\ref{tab:hyper_params}. 

While Theorem~\ref{thm:convergence} and Lemma~\ref{lem:gap between gradient} establish that the penalized formulation strictly recovers the original bi-level objective as $\lambda \to \infty$, employing an infinitely large penalty in practical deep learning inevitably leads to gradient explosion and numerical instability. In our BLADE framework, we mitigate this theory-practice gap through the periodic synchronization mechanism ($\bw_0^{(e)} = \bu^{(e\tau)}$). Because the reference model $\bw$ and proxy model $\bu$ are periodically realigned, their structural drift is bounded (as empirically verified in Figure~\ref{fig:dist_uw}). This bounded disparity naturally constrains the penalty term. Consequently, we find that setting a smaller $\lambda$ provides sufficient soft constraint to enforce the bi-level objective without destabilizing the optimization trajectory.

\section{Additional Experiments}
\label{sec:additional_exps}

\subsection{The Transferability of the Selected Tokens}
\begin{table}[]
\caption{Comparison for the CoT reasoning results of math pre-training on the Llama2-7B. $\dagger$ denotes the results using our implementation. $*$ means sample level selection method. The result of transferring is grayed, where the token orchestration learned with TinlyLlama-1.1B is transferred to a larger (7B) model training.}
\label{tab:token_transferablity}
\centering
\footnotesize 
\begin{tabular}{@{}lc|cccccccccc@{}}
\toprule
         & \begin{tabular}[c]{@{}c@{}}Train\\ Toks\end{tabular} & GSM8K & MATH & SVAMP & ASDiv & MAWPS & TAB  & MQA  & \begin{tabular}[c]{@{}c@{}}MMLU\\ STEM\end{tabular} & SAT  & Avg.  \\ \midrule
Base     & -                                                    & 14.0  & 3.6  & 39.5  & 51.7  & 63.5  & 30.9 & 12.4 & 32.7                                                & 34.4 & 31.4 \\
Base-CT  & 5B                                                   & 18.1  & 4.4  & 36.4  & 52.7  & 66.1  & 37.3 & 18.5 & 33.4                                                & 37.5 & 33.8 \\
Random   & 3B                                                   & 17.3  & 5.2  & 36.9  & 51.9  & 64.2  & 40.2 & 17.1 & 32.9                                                & 25.0 & 32.3 \\
MATES$^*$    & 3B                                                   & 16.5  & 4.4  & 35.5  & 51.6  & 65.7  & 38.6 & 15.8 & 33.6                                                & 38.6 & 33.4 \\
RHO-1$\dagger$    & 3B                                                   & 33.0  & 7.6  & 57.9  & 60.6  & 78.9  & 48.4 & 24.9 & 30.1                                                & 28.1 & 41.1 \\
BLADE    & 3B                                                   & \textbf{40.4}  & \textbf{9.8}  & \textbf{60.8}  & \textbf{66.1}  & \textbf{84.9}  & \textbf{49.6} & \textbf{27.3} & 33.1                                                & 21.9 & \textbf{43.8} \\
\rowcolor{gray!20} BLADE-Tr & 3B                                                   & 23.3  & 5.8  & 41.5  & 56.4  & 68.2  & 42.5 & 21.0 & \textbf{35.3}                                                & \textbf{40.6} & 37.2 \\ \bottomrule
\end{tabular}
\end{table}

While BLADE effectively identifies high-quality training data, scoring and selecting data natively on massive LLMs introduces computational overhead. To further reduce the selection computation, we investigate whether data selected by a smaller proxy model is transferable. We train the 7B-parameter Llama2 model using the exact token pre-selected by BLADE on the 1.1B TinyLlama (denoted as BLADE-Tr). As shown in Table~\ref{tab:token_transferablity}, BLADE-Tr achieves an average accuracy of 37.2, substantially outperforming the random sampling baseline (32.3) and naive continuous pre-train baseline (33.8). This indicates that data quality signals are partially model-agnostic and transferable. However, BLADE-Tr noticeably underperforms native BLADE on the 7B model (43.8). This gap demonstrates that optimal data selection is capacity-dependent. The 1.1B proxy model may misclassify "hard but learnable" samples as unlearnable noise, discarding data that would actually be highly informative for the 7B model. Therefore, while cross-scale data transfer is a viable low-cost compromise, fully unlocking a large model's potential requires capacity-matched data selection and orchestration.

\section{Summary of Notations}
\label{sec:summary_of_notations}

\begin{table}[H]
\caption{Summary of the notations used throughout this paper. Variables only used in theoretical analysis are grayed for better readability. }
\setlength\tabcolsep{2.5 pt}
\begin{tabular}{@{}lll@{}}
\toprule
Topic           & Notation          & Explanation             \\ \midrule
Data Sets       & $N$               & The number of candidate samples.           \\
                & $\gamma$           & The keeping ratio.           \\
                & $\cD_{\text{train}}$    & Train set.               \\
                & $\cD_{\text{val}}$      & Validation set.                                                                \\ \midrule
Models \& Parameters & $\bu$             & Parameters of the proxy model.                                                                \\
                      & $\bw$                                                           & Parameters of the reference model.                                                  \\
\rowcolor{gray!40}    & $\bw^{*}$                                                       & Optimal solution for the lower level problem.                                                  \\
\rowcolor{gray!40}    & $\bw_{\lambda}^*$     & Optimal solution for the penalized problem.                                             \\
\rowcolor{gray!40}   & $\bS^*(\boldsymbol{\alpha})$                                                               & Solution set for the  lower level problem. \\
\rowcolor{gray!40}   & $\bS_\lambda^*(\boldsymbol{\alpha})$                                                               & Solution set for the  penalized problem.                                           \\
                     & $\balpha$                                                    & Data mixture ratio                                           \\
                     & $\cA$                                                    & The feasible set of $\balpha$.                                           \\ \midrule
Problems \& Losses        & $\mathcal{L}_{\text{train}}^n$        &  Train loss on the $n$-th training sample.      \\  

                                   &  $\mathcal{L}_{\text{train}}$        &   Overall train loss weighted by the selection weight $\balpha$.  \\
                                   &  $\mathcal{L}_{\text{val}}$        &   Overall validation loss.  \\
\rowcolor{gray!40}                 &  $\cH(\balpha)$        &   The original bi-level optimization problem.  \\
\rowcolor{gray!40}                 &  $\cH_{\lambda}(\balpha, \bw)$        &   The penalized problem.  \\
\rowcolor{gray!40}                 &  $\cH_{\lambda}(\balpha, \bw_{\lambda}(\bw^*))$        &   The penalized problem with $\bw$ optimized.  \\
\rowcolor{gray!40}                 &   $G(\balpha)$                       &   The Frank-Wolfe Gap of the original bi-level optimization problem.  \\
\rowcolor{gray!40}                 &   $G_{\lambda}(\balpha)$             &   The Frank-Wolfe Gap of the penalized problem with $\bw$ optimized.  \\
                                   
                                   \midrule
\rowcolor{gray!40} Function Properties        & $\mu$     & PL coefficient of the lower-level problem.        \\
\rowcolor{gray!40}           & $\mu_{\gamma}$                                                          &  PL coefficient of the penalized problem. \\ 
\rowcolor{gray!40}           & $L$                                                          &  Lipschitz constant for $\nabla_{\boldsymbol{w}} \mathcal{L}_{\text {train }}(\boldsymbol{\alpha}, \boldsymbol{w})$ on $\bw$. \\
\rowcolor{gray!40}           & $B$                                                          &  Lipschitz constant for $\mathcal{L}_{\text {train }}(\boldsymbol{\alpha}, \boldsymbol{w})$ and $\mathcal{L}_{\text {val }}(\boldsymbol{w})$ on $\bw$. \\
\rowcolor{gray!40}           & $\kappa$ and $\rho$                                                          &  Hessian $\nabla^2_{\boldsymbol{w} \boldsymbol{w}} \mathcal{L}_{\text {train }}(\boldsymbol{\alpha}, \boldsymbol{w}) \succeq \kappa$ and $\nabla_{\boldsymbol{\alpha} \boldsymbol{w}}^2 \mathcal{L}_{\text {train }}(\boldsymbol{\alpha}, \boldsymbol{w}) \preceq \rho$ \\
\rowcolor{gray!40}           & $D$                                                          &  Upper bound for the train/validation loss. \\
\rowcolor{gray!40}           & $L_\lambda$                                                          &  Lipschitz constant for $\nabla_{\boldsymbol{\alpha}} \mathcal{H}_\lambda\left(\boldsymbol{\alpha}, \boldsymbol{w}_\gamma^*(\boldsymbol{\alpha})\right)$.  \\
\rowcolor{gray!40}           & $D_{\mathcal{A}}$                                   &    Diameter of the feasible set of $\balpha$. \\
\midrule
Train                 & $t$                                                               & Proxy model $\bu$ training step/Selection step                                                           \\
                      & $T$                                                               & The total number of proxy model training steps.                                    \\ 
                       & $k$                                                               &  $\bw$ update step.  \\
                        & $K$                                                               & The number of $\bw$ update steps each episode. \\
                       & $\tau$                                                               & Episode of free $\bu$ update step.  \\
                      & $\eta_{\balpha}$                                                               & The learning rate on $\balpha$.  \\
                      & $\eta_{\bu}$                                                               & The learning rate on $\bu$.  \\
                      & $\eta_{\bw}$                                                               & The learning rate on $\bw$.  \\
                      & $\lambda$                                                               & The penalty strength.  \\
                      \bottomrule
\end{tabular}
\end{table}

\section{Limitations and Future Works}
\label{sec:limitations}
Despite the advancements introduced in this work, several challenges remain open for future research. The limitations of this paper are as follows: First, due to the practical constraints of academic computational resources, our experiments evaluate the proposed framework on architectures with up to 7 billion parameters. Within this setting, BLADE demonstrates robust and consistent improvements. Extending this evaluation to frontier-scale foundation models, such as those with hundreds of billions of parameters, would provide valuable insight into its scaling behavior and broader applicability. Second, as with other validation-guided data selection approaches, BLADE benefits from a high-quality and representative target validation set. In this work, we use a static, human-curated validation blend. Developing automated methods to dynamically construct and update this validation set is a promising direction for future work, and could further improve the framework’s adaptability and autonomy.

%% file: bibfile.bib
@inproceedings{mindermann2022rho,
  title = 	{Prioritized Training on Points that are Learnable, Worth Learning, and Not Yet Learnt},
  author =    {Sören Mindermann and Jan Brauner and Muhammed Razzak and Mrinank Sharma and Andreas Kirsch and Winnie Xu and Benedikt Höltgen and Aidan N. Gomez and Adrien Morisot and Sebastian Farquhar and Yarin Gal},
  booktitle = {International Conference on Machine Learning},
  year = 	 {2022}
}

@inproceedings{lin2024rho1,
title={RHO-1: Not All Tokens Are What You Need},
author={Zhenghao Lin and Zhibin Gou and Yeyun Gong and Xiao Liu and Yelong Shen and Ruochen Xu and Chen Lin and Yujiu Yang and Jian Jiao and Nan Duan and Weizhu Chen},
booktitle={Neural Information Processing Systems},
year={2024},
}

@inproceedings{yu2024mates,
title={MATES: Model-Aware Data Selection for Efficient Pretraining with Data Influence Models},
author={Zichun Yu and Spandan Das and Chenyan Xiong},
booktitle={Neural Information Processing Systems},
year={2024},
}

@InProceedings{pan2024gdig,
  title = 	 {G-DIG: Towards Gradient-based Diverse and High-quality Instruction Data Selection for Machine Translation},
  author =       {Xingyuan Pan and Luyang Huang and Liyan Kang and Zhicheng Liu and Yu Lu and Shanbo Cheng},
  booktitle = 	 {Association for Computational Linguistics},
  year = 	 {2024}
}

@inproceedings{koh2017influence,
  title = 	{Understanding Black-box Predictions via Influence Functions},
  author =    {Pang Wei Koh and Percy Liang},
  booktitle = {International Conference on Machine Learning},
  year = 	 {2017}
}

@inproceedings{shen23onpenalty,
  title = 	{On Penalty-based Bilevel Gradient Descent Method},
  author =    {Shen, Han and Chen, Tianyi},
  booktitle = {International Conference on Machine Learning},
  year = 	 {2023}
}

@InProceedings{kwon2023f2sa,
  title = 	 {A Fully First-Order Method for Stochastic Bilevel Optimization},
  author =       {Jeongyeol Kwon and Dohyun Kwon and Stephen Wright and Robert Nowa},
  booktitle = 	 {International Conference on Machine Learning},
  year = 	 {2023}
}

@inproceedings{wang2025tandem,
title={TANDEM: Bi-Level Data Mixture Optimization with Twin Networks},
author={Jiaxing Wang and Deping Xiang and Jin Xu and Mingyang Yi and Guoqiang Gong and Zicheng Zhang and Haoran Li and Pengzhang Liu and Zhen Chen and Ke Zhang and Ju Fan and Qixiang Jiang},
booktitle={Neural Information Processing Systems},
year={2025},
}

@inproceedings{wang2024greats,
title={GREATS: Online Selection of High-Quality Data for
LLM Training in Every Iteration},
author={Jiachen T. Wang and Tong Wu and Dawn Song and Prateek Mittal and Ruoxi Jia},
booktitle={Neural Information Processing Systems},
year={2024},
}

@InProceedings{xia2024less,
  title = 	 {LESS: Selecting Influential Data for Targeted Instruction Tuning},
  author =     {Mengzhou Xia and Sadhika Malladi and Suchin Gururangan and Sanjeev Arora and Danqi Chen},
  booktitle = 	 {International Conference on Machine Learning},
  year = 	 {2024}
}

@inproceedings{choe2024metalearning,
 author = {Sang Keun Choe and Sanket Vaibhav Mehta and Hwijeen Ahn, Willie Neiswanger and Pengtao Xie and Emma Strubell and Eric Xing},
 booktitle = {Advances in Neural Information Processing Systems},
 title = {Making Scalable Meta Learning Practical},
 year = {2024}
}

@InProceedings{lacoste-julien2013blockfw,
  title = 	 {Block-Coordinate {Frank-Wolfe} Optimization for Structural {SVMs}},
  author = 	 {Lacoste-Julien, Simon and Jaggi, Martin and Schmidt, Mark and Pletscher, Patrick},
  booktitle = 	 { International Conference on Machine Learning},
  year = 	 {2013},
}

@InProceedings{reddi2016sfwnoncovex,
  title = 	 {Stochastic Frank-Wolfe Methods for Nonconvex Optimization},
  author = 	 {Sashank J. Reddi and Suvrit Sra and Barnabas Poczos and Alex Smola},
  booktitle = 	 {Annual Allerton Conference on Communication, Control, and Computing},
  year = 	 {2016},
}

@InProceedings{hazan2012prejectionfree,
  title = 	 {Projection-free Online Learning},
  author = 	 {Elad Hazan and Satyen Kale},
  booktitle = 	 { International Conference on Machine Learning},
  year = 	 {2012},
}

@inproceedings{yu2024metamath,
title={MetaMath: Bootstrap Your Own Mathematical Questions for Large Language Models},
author={Longhui Yu and Weisen Jiang and Han Shi and Jincheng Yu and Zhengying Liu and Yu Zhang and James T. Kwok and Zhenguo Li and Adrian Weller and Weiyang Liu},
booktitle={International Conference on Learning Representations},
year={2024},
}

@inproceedings{yu22024mammoth,
title={Mammoth: Building Math Generalist Models Through Hybrid Instruction Tuning},
author={Xiang Yue and Xingwei Qu and Ge Zhang and Yao Fu and Wenhao Huang and Huan Sun and Yu Su and and Wenhu Chen},
booktitle={International Conference on Learning Representations},
year={2024},
}

@article{zhang2024tinyllama,
  author       = {Peiyuan Zhang and Guangtao Zeng and Tianduo Wang and Wei Lu},
  title        = {Tinyllama: An open-source small language model},
  journal      = {arXiv preprint},
  volume       = {arXiv:2401.02385},
  year         = {2024}
}

@inproceedings{paster2024openwebmath,
title={OpenWebMath: An Open Dataset of High-Quality Mathematical Web Text.},
author={Keiran Paster and Marco Dos Santos and Zhangir Azerbayev and Jimmy Ba},
booktitle={International Conference on Learning Representations},
year={2024},
}

@inproceedings{dagreou2022bilevel,
 author = {Dagréou Mathieu and Pierre Ablin and Samuel Vaiter and Thomas Moreau},
 booktitle = {Advances in Neural Information Processing Systems},
 title = {A framework for bilevel optimization that enables stochastic and global variance reduction algorithms},
 year = {2022}
}

@InProceedings{shaban2019tuncated,
  title = 	 {Truncated Back-propagation for Bilevel Optimization},
  author =       {Amirreza Shaban and Ching-An Cheng and Nathan Hatch and Byron Boots},
  booktitle = 	 {International Conference on Artificial Intelligence and Statistics},
  year = 	 {2019}
}

@inproceedings{grazzi2023bilevel,
 author = {Riccardo Grazzi and Massimiliano Pontil and Saverio Salzo},
 booktitle = {Journal of Machine Learning Research },
 title = {Bilevel Optimization with a Lower-level Contraction: Optimal Sample Complexity without Warm-Start},
 year = {2023}
}

@inproceedings{chen2021tight,
 author = {Chen, Tianyi and Sun, Yuejiao and Yin, Wotao},
 booktitle = {Advances in Neural Information Processing Systems},
title = {Closing the Gap: Tighter Analysis of Alternating Stochastic Gradient Methods for Bilevel Problems},
 year = {2021}
}

@article{hong2023twotimescale,
  author={Mingyi Hong and Hoi-To Wai and Zhaoran Wang and Zhuoran Yang},
  title={A Two-Timescale Stochastic Algorithm Framework for Bilevel Optimization: Complexity Analysis and Application to Actor-Critic},
  year={2023},
  journal={SIAM J. Optim.}
}

@inproceedings{kwon2024onpenalty,
title={On Penalty Methods for Nonconvex Bilevel Optimization and First-Order Stochastic Approximation},
author={Jeongyeol Kwon and Dohyun Kwon and Stephen Wright and Robert D Nowak},
booktitle={International Conference on Learning Representations},
year={2024},
}

@inproceedings{zhou2022pbilevel,
  title = 	{Probabilistic Bilevel Coreset Selection},
  author =    {Xiao Zhou and Renjie Pi and Weizhong Zhang and Yong Lin and Tong Zhang},
  booktitle = {International Conference on Machine Learning},
  year = 	 {2022}
}

@article{teknium2023openhermes,
  author       = {Teknium},
  title        = {Openhermes 2.5: An Open Dataset of Synthetic Data for Generalist LLM Assistants},
  url      = {https://huggingface.co/datasets/teknium/OpenHermes-2.5},
  year         = {2023}
}

@article{hendrycks2020mmlu,
  author       = {Dan Hendrycks and Collin Burns and Steven Basart and Andy Zou and Mantas Mazeika and Dawn Song and Jacob Steinhardt},
  title        = {Measuring Massive Multitask Language Understanding},
  journal      = {arXiv preprint},
  volume       = {arXiv:2009.03300},
  year         = {2020}
}

@article{clark2018arc,
  author       = {Peter Clark and Isaac Cowhey and Oren Etzioni and Tushar Khot and Ashish Sabharwal and Carissa Schoenick and Oyvind Tafjord},
  title        = {Think You Have Solved Question Answering? Try ARC, the AI2 Reasoning Challenge.},
  journal      = {arXiv preprint},
  volume       = {arXiv:1803.05457},
  year         = {2018}
}

@article{clark2019boolq,
  author       = {Christopher Clark abd Kenton Lee and Ming-Wei Chang and Tom Kwiatkowski and Michael Collins and Kristina Toutanova.},
  title        = {Boolq: Exploring the Surprising Difficulty of Natural Yes/No Questions},
  journal      = {arXiv preprint},
  volume       = {arXiv:1905.10044},
  year         = {2019}
}

@inproceedings{bisk2020pika,
  title = 	{Piqa: Reasoning About Physical Commonsense in Natural Language},
  author =    {Yonatan Bisk and Rowan Zellers and Ronan Le Bras and Jianfeng Gao and Yejin Choi},
  booktitle = {AAAI conference on artificial intelligence},
  year = 	 {2020}
}

@article{zellers2018hellaswag,
  author       = {Rowan Zellers and Ari Holtzman and Yonatan Bisk and Ali Farhadi and Yejin Choi},
  title        = {Hellaswag: Can a machine really finish your sentence? },
  journal      = {arXiv preprint},
  volume       = {arXiv:1905.07830},
  year         = {2019}
}

@inproceedings{sakaguchi2020winogrande,
  title = 	{Winogrande: An Adversarial Winograd Schema Challenge at Scale},
  author =    {Keisuke Sakaguchi and Ronan Le Bras and Chandra Bhagavatula and Yejin Choi},
  booktitle = {AAAI conference on artificial intelligence},
  year = 	 {2020}
}

@article{2018openbookqa,
  author       = {Todor Mihaylov and Peter Clark and Tushar Khot and Ashish Sabharwal},
  title        = {Can A Suit of Armor Conduct Electricity? A New Dataset for Open Book Question Aanswering},
  journal      = {arXiv preprint},
  volume       = {arXiv:1809.02789},
  year         = {2018}
}

@article{cobbe2021gsm8k,
  author       = {Karl Cobbe and Vineet Kosaraju and Mohammad Bavarian and Mark Chen and Heewoo Jun and Lukasz Kaiser and Matthias Plappert and Jerry Tworek and Jacob Hilton and Reiichiro Nakano and Christopher Hesse and John Schulman},
  title        = {Training verifiers to solve math word problems},
  journal      = {arXiv preprint},
  volume       = {arXiv:2110.14168},
  year         = {2021}
}

@inproceedings{hendrycks2021math,
title={Measuring mathematical problem solving with the math dataset},
author={Dan Hendrycks and Collin Burns and Saurav Kadavath and Akul Arora and Steven Basart and Eric Tang and Dawn Song and Jacob Steinhardt},
booktitle={Neural Information Processing Systems},
year={2021},
}

@inproceedings{patel2021svamp,
title={Are NLP models really able to solve simple math word problems?},
author={Arkil Patel and Satwik Bhattamishra and Navin Goyal},
booktitle={Conference of the North American Chapter of the Association for Computational Linguistics},
year={2021},
}

@InProceedings{miao2020asdiv,
  title = 	 {A diverse corpus for evaluating and developing English math word problem solvers},
  author =       {Shenyun Miao and Chaochun Liang and Keh-Yih Su},
  booktitle = 	 {Association for Computational Linguistics},
  year = 	 {2020}
}

@InProceedings{kedziorski2016mawps,
  title = 	 {MAWPS: A math word problem repository},
  author =       {Rik Koncel Kedziorski and Subhro Roy and Aida Amini and Nate Kushman and Hannaneh Hajishirzi},
  booktitle = 	 {Conference of the North American Chapter of the Association for Computational Linguistics},
  year = 	 {2016}
}

@inproceedings{lu2023tab,
title={Dynamic prompt learning via policy gradient for semi-structured mathematical reasoning},
author={Pan Lu and Liang Qiu and Kai-Wei Chang and Ying Nian Wu and Song-Chun Zhu and Tanmay Rajpurohit and Peter Clark and Ashwin Kalyan},
booktitle={International Conference on Learning Representations},
year={2023},
}

@article{amini2019mathqa,
  author       = {Aida Amini and Saadia Gabriel and Peter Lin and Rik Koncel-Kedziorski and Yejin Choi and Hannaneh Hajishirzi},
  title        = {MATHQA: Towards Interpretable Math Word Problem Solving with Operation-based Formalisms},
  journal      = {arXiv preprint},
  volume       = {arXiv:1905.13319},
  year         = {2019}
}

@article{soboleva2023slimpajama,
  author       = {Daria Soboleva and Faisal Al-Khateeb and Joel Hestness and Nolan Dey
Opentensor: Robert Myers, Jacob Robert Steeves},
  title        = {SlimPajama: A 627B token, cleaned and deduplicated version of RedPajam},
  journal = {https://www.cerebras.ai/blog/slimpajama-a-627b-token-cleaned-and-deduplicated-version-of-redpajama},
  year         = {2023}
}

@misc{gao2024lmevalharness,
  author       = {Gao, Leo and Tow, Jonathan and Abbasi, Baber and Biderman, Stella and Black, Sid and DiPofi, Anthony and Foster, Charles and Golding, Laurence and Hsu, Jeffrey and Le Noac'h, Alain and Li, Haonan and McDonell, Kyle and Muennighoff, Niklas and Ociepa, Chris and Phang, Jason and Reynolds, Laria and Schoelkopf, Hailey and Skowron, Aviya and Sutawika, Lintang and Tang, Eric and Thite, Anish and Wang, Ben and Wang, Kevin and Zou, Andy},
  title        = {The Language Model Evaluation Harness},
  year         = 2024,
  url          = {https://zenodo.org/records/12608602}
}

@inproceedings{raffel2020T5,
  title={Exploring the limits of transfer learning with a unified text-to-text transformer},
  author={Colin Raffel and Noam Shazeer and Adam Roberts and Katherine Lee and Sharan Narang and Michael Matena and Yanqi Zhou and Wei Li and Peter J. Liu},
  booktitle={Journel of Machine Learning Research},
  year={2020}
}

@article{rae2021gopher,
  author       = {Jack W Rae and Sebastian Borgeaud and Trevor Cai and Katie Millican and Jordan Hoffmann and Francis Song and John Aslanides and Sarah Henderson and Roman Ring and Susannah Young et. al.},
  title        = {Scaling language models: Methods, analysis \& insights from training Gopher},
  journal      = {arXiv preprint},
  volume       = {arXiv:2112.11446},
  year         = {2021}
}

@inproceedings{tirumala2023d4,
title={D4: Improving LLM pretraining via document de-duplication and diversification},
author={Kushal Tirumala and Daniel Simig and Armen Aghajanyan and Ari Morcos},
booktitle={Neural Information Processing Systems},
year={2023},
}

@inproceedings{xie2023dsir,
title={Data selection for language models via importance resampling},
author={Sang Michael Xie and Shibani Santurkar and Tengyu Ma and Percy Liang},
booktitle={Neural Information Processing Systems},
year={2023},
}

@article{hao2026bliss,
  author       = {Jie Hao and Rui Yu and Wei Zhang and Huixia Wang and Jie Xu and Mingrui Liu},
  title        = {BLISS: A Lightweight Bilevel Influence Scoring Method for Data Selection in Language Model Pretraining},
  journal      = {arXiv preprint},
  volume       = {arXiv:2510.06048},
  year         = {2025}
}

@article{neyshabur14implicitreg,
  author       = {Behnam Neyshabur and Ryota Tomioka and Nathan Srebro},
  title        = {In Search of the Real Inductive Bias: On the Role of Implicit Regularization in Deep Learning},
  journal      = {arXiv preprint},
  volume       = {arXiv:1412.6614},
  year         = {2014}
}

@inproceedings{soudry18implicitbias,
title={The Implicit Bias of Gradient Descent on Separable Data},
author={Daniel Soudry and Elad Hoffer and Mor Shpigel Nacson and Suriya Gunasekar and Nathan Srebro},
booktitle={Journal of Machine Learning Research},
year={2018},
}

@article{zhou23lima,
  author       = {Chunting Zhou and Pengfei Liu and Puxin Xu and Srini Iyer and Jiao Sun and Yuning Mao and Xuezhe Ma and Avia Efrat and Ping Yu and Lili Yu and Susan Zhang and Gargi Ghosh and Mike Lewis and Luke Zettlemoyer and Omer Levy},
  title        = {LIMA: Less Is More for Alignment},
  journal      = {arXiv preprint},
  volume       = {arXiv:2305.11206},
  year         = {2023}
}

@article{touvron2023llama2,
  author       = {Hugo Touvron and Louis Martin and Kevin Stone and Peter Albert and Amjad Almahairi and Yasmine Babaei and Nikolay Bashlykov et.al.},
  title        = {Llama 2: Open Foundation and Fine-Tuned Chat Models},
  journal      = {arXiv preprint},
  volume       = {arXiv:2307.09288},
  year         = {2023}
}

@inproceedings{engstrom2024dsdm,
  title={DsDm: Model-Aware Dataset Selection with Datamodels},
  author={Engstrom, Logan and Feldmann, Axel and Madry, Aleksander},
  booktitle={International Conference on Machine Learning},
  pages={12491--12526},
  year={2024},
  organization={PMLR}
}

@article{simon2016convergefw,
  author       = {Simon Lacoste-Julien},
  title        = {Convergence Rate of Frank-Wolfe for Non-Convex Objectives},
  journal      = {arXiv preprint},
  volume       = {arXiv:1607.00345},
  year         = {2016}
}

@article{azerbayev2023sat,
  author       = {Zhangir Azerbayev and Hailey Schoelkopf and Keiran Paster and Marco Dos Santos and Stephen McAleer and Albert Q. Jiang and Jia Deng and Stella Biderman and Sean Welleck},
  title        = {Llemma: An Open Language Model For Mathematics},
  journal      = {arXiv preprint},
  volume       = {arXiv:2310.10631},
  year         = {2023}
}
